\documentclass{article}

 \PassOptionsToPackage{round}{natbib}

\usepackage[preprint]{nips_2018}




\usepackage[utf8]{inputenc} 
\usepackage{lipsum}
\usepackage{enumitem}

\usepackage[T1]{fontenc}    
\usepackage{hyperref}       
\usepackage{algorithm}
\usepackage{wrapfig}
\usepackage{algorithmic} 
\usepackage{graphicx}
\usepackage{amsmath}
\usepackage{amsthm}
\usepackage{url}            
\usepackage{booktabs}       
\usepackage{amsfonts}       
\usepackage{nicefrac}       
\usepackage{microtype}      

\title{Leveraging the Exact Likelihood of Deep Latent Variable Models}

%

\author{
  Pierre-Alexandre Mattei \\
  Department of Computer Science\\
  IT University of Copenhagen\\
  \texttt{pima@itu.dk} \\
   \And
 Jes Frellsen \\
  Department of Computer Science \\
  IT University of Copenhagen \\
  \texttt{jefr@itu.dk} \\
}

\begin{document}

\newtheorem{theorem}{Theorem}
\newtheorem{proposition}{Proposition}
\newtheorem*{proposition*}{Proposition}
\newtheorem*{theorem*}{Theorem}

\maketitle

\begin{abstract}
Deep latent variable models (DLVMs) combine the approximation abilities of deep neural networks and the statistical foundations of generative models. Variational methods are commonly used for inference; however, the exact likelihood of these models has been largely overlooked. The purpose of this work is to study the general properties of this quantity and to show how they can be leveraged in practice. We focus on important inferential problems that rely on the likelihood: estimation and missing data imputation. First, we investigate maximum likelihood estimation for DLVMs: in particular, we show that most unconstrained models used for continuous data have an unbounded likelihood function. This problematic behaviour is demonstrated to be a source of mode collapse. We also show how to ensure the existence of maximum likelihood estimates, and draw useful connections with nonparametric mixture models. Finally, we describe an algorithm for missing data imputation using the exact conditional likelihood of a deep latent variable model. On several data sets, our algorithm consistently and significantly outperforms the usual imputation scheme used for DLVMs.
\end{abstract}

\section{Introduction}

Dimension reduction aims at summarizing multivariate data using a small number of features that constitute a \emph{code}. Earliest attempts rested on linear projections, leading to Hotelling's \citeyearpar{hotelling1933} \emph{principal component analysis} (PCA) that has been vastly explored and perfected over the last century \citep{jolliffe2016principal}. In recent years, the field has been vividly animated by the successes of \emph{latent variable models}, that probabilistically use the low-dimensional features to define powerful generative models. Usually, these latent variable models transform the random code into parameters of a simple \emph{output distribution}. Linear mappings and Gaussian outputs were initially considered, giving rise to factor analysis \citep{bartholomew2011} and probabilistic principal component analysis \citep{tipping1999}. Beyond the Gaussian distribution, Bernoulli or more general exponential family outputs have been considered. In recent years, much work has been done regarding nonlinear mappings parametrised by deep neural networks, following the seminal papers of \citet{rezende2014} and \citet{kingma2014}. These models have led to impressive empirical performance in unsupervised and semi-supervised generative modelling of images \citep{narayanaswamy2017}, molecular structures \citep{kusner2017,gomez2018}, or arithmetic expressions \citep{kusner2017}. This paper is an investigation of the statistical properties of these models, which remain essentially unknown.


\subsection{Deep latent variable models}

In their most common form, \emph{deep latent variable models} (DLVMs) assume that we are in presence of a data matrix $\mathbf{X} = (\mathbf{x}_1,...,\mathbf{x}_n)^T \in \mathcal{X}^n$ that we wish to explain using some latent variables $\mathbf{Z} = (\mathbf{z}_1,...,\mathbf{z}_n)^T \in \mathbb{R}^{n \times d}$. We assume that $(\mathbf{x}_i,\mathbf{z}_i)_{i\leq n}$ are independent and identically distributed (i.i.d.) random variables driven by the following generative model:
\begin{equation}
\left\{
\begin{array}{ll}
\mathbf{z} \sim p(\mathbf{z}) \\
p_{\boldsymbol{\theta}}(\mathbf{x}|\mathbf{z}) = \Phi(\mathbf{x}| f_{\boldsymbol{\theta}}(\mathbf{z})).
\end{array}
\right.
\end{equation}

The unobserved random vector $\mathbf{z} \in \mathbb{R}^d$ is called the \emph{latent variable} and usually follows marginally a simple distribution $p(\mathbf{z})$ called the \emph{prior distribution}. The dimension $d$ of the latent space is called the \emph{intrinsic dimension}, and is usually smaller than the dimensionality of the data. The collection $(\Phi (\cdot|\boldsymbol{\eta}))_{\boldsymbol{\eta} \in H}$ is a parametric family of \emph{output densities} with respect to a dominating measure (usually the Lebesgue or the counting measure). The function $f_{\boldsymbol{\theta}}: \mathbb{R}^d \rightarrow H$ is called a \emph{decoder} or a \emph{generative network}, and is parametrised by a (deep) neural network whose weights are stored in $\boldsymbol{\theta} \in \boldsymbol{\Theta}$. This parametrisation allows to leverage recent advances in deep architectures, such as deep residual networks \citep{kingma2016}, recurrent networks \cite{bowman2016}, or batch normalisation \citep{sonderby2016}. Several output families have been considered:
\begin{itemize}[leftmargin=3ex,topsep=0ex]
	\item In case of discrete multivariate data, products of Multinomial distributions.
	\item In case of real multivariate data, multivariate Gaussian distributions.
	\item When dealing with images, several specific proposals have been made (e.g. the discretised logistic mixture approach of \citealp{salimans2017}). 
	\item Dirac outputs correspond to \emph{deterministic decoders}, that are used e.g. within generative adversarial networks (\citealp{goodfellow2014}), or non-volume preserving transformations \citep{dinh2017}.
\end{itemize}

The Gaussian and Bernoulli families are the most widely studied, and will be the focus of this article. The latent structure of these DLVMs leads to the following marginal distribution of the data:
\begin{equation}
p_{\boldsymbol{\theta}}(\mathbf{x})=\int_{\mathbb{R}^d} p_{\boldsymbol{\theta}}(\mathbf{x}|\mathbf{z}) p(\mathbf{z}) d\mathbf{z} = \int_{\mathbb{R}^d} \Phi(\mathbf{x}| f_{\boldsymbol{\theta}}(\mathbf{z})) p(\mathbf{z}) d\mathbf{z}.
\end{equation}

\subsection{Scalable learning through amortised variational inference}

The log-likelihood function of a DLVM is, for all $\boldsymbol{\theta} \in \boldsymbol{\Theta}$,
\begin{equation}
\ell({\boldsymbol{\theta}})= \log p_{\boldsymbol{\theta}}(\mathbf{X}) =\sum_{i=1}^n \log p_{\boldsymbol{\theta}}(\mathbf{x}_i),
\end{equation}
which is an extremely challenging quantity to compute that involves potentially high-dimensional integrals. Estimating $\boldsymbol{\theta}$ by maximum likelihood appears therefore out of reach. Consequently, following \citet{rezende2014} and \citet{kingma2014}, inference in DLVMs is usually performed using \emph{amortised variational inference}. Variational inference approximatively maximises the log-likelihood by maximising a lower bound known as the \emph{evidence lower bound} (ELBO, see e.g. \citealp{blei2017}):
\begin{equation}
\textup{ELBO}(\boldsymbol{\theta}, q) = \mathbb{E}_{\mathbf{Z}\sim q} \left[ \log \frac{p(\mathbf{X},\mathbf{Z})}{q(\mathbf{Z})}\right] =  \ell({\boldsymbol{\theta}}) - \textup{KL}(q||p(\cdot | \mathbf{X})  ) 
\leq \ell(\boldsymbol{\theta}).
\end{equation}
where the \emph{variational distribution} $q$ is a distribution over the space of codes $\mathbb{R}^{n\times d}$. The variational distribution plays the role of a tractable approximation of the posterior distribution of the codes; when this approximation is perfectly accurate, the ELBO is equal to the log-likelihood. \emph{Amortised inference} builds $q$ using a neural network called the \emph{inference network} $g_{\boldsymbol{\gamma}}: \mathcal{X} \rightarrow K$, whose weights are stored in $\boldsymbol{\gamma} \in \mathbf{\Gamma}$:
\begin{equation}q_{\boldsymbol{\gamma},\mathbf{X}}(\mathbf{Z}) = q_{\boldsymbol{\gamma},\mathbf{X}}(\mathbf{z}_1,...,\mathbf{z}_n) = \prod_{i=1}^n \Psi(\mathbf{z}_i | g_{\boldsymbol{\gamma}} (\mathbf{x}_i)),\end{equation}
where $(\Psi(\cdot | \boldsymbol{\kappa}))_{\boldsymbol{\kappa} \in K}$ is a parametric family of distributions over $\mathbb{R}^d$ -- such as Gaussians with diagonal covariances \citep{kingma2014}. Other kinds of families -- built using e.g. normalising flows \citep{rezende2015,kingma2016}, auxiliary variables \citep{maaloe2016,ranganath2016}, or importance weights \citep{burda2016,cremer2017} -- have been considered for amortised inference, but they will not be central focus of in this paper. Within this framework,  variational inference for DLVMs solves the optimisation problem
$\max_{\boldsymbol{\theta}\in \boldsymbol{\Theta},\boldsymbol{\gamma} \in \boldsymbol{\Gamma}} \textup{ELBO}(\boldsymbol{\theta}, q_{\boldsymbol{\gamma},\mathbf{X}})$
using variants of stochastic gradient ascent (see e.g. \citealp{roeder2017}, for strategies for computing gradients estimates of the ELBO).

As emphasised by \citet{kingma2014}, the ELBO resembles the objective function of a popular deep learning model called an \emph{autoencoder} (see e.g. \citealp{goodfellow2016}, Chapter 14). This motivates the popular denomination of \emph{encoder} for the inference network $g_{\boldsymbol{\gamma}}$ and \emph{variational autoencoder} (VAE) for the combination of a DLVM with amortised variational inference.

\paragraph{Contributions.} In this work, we revisit DLVMs
by asking: \emph{Is it possible to leverage the properties of $p_{\boldsymbol{\theta}}(\mathbf{x})$ to understand and improve deep generative modelling?}
Our main contributions are:
\begin{itemize}[leftmargin=3ex,topsep=0ex]
	\item We show that \emph{maximum likelihood is ill-posed for continuous DLVMs and well-posed for discrete ones}. We link this undesirable property of continuous DLVMs to the mode collapse phenomenon, and illustrate it on a real data set.
	\item We draw a connection between DLVMs and nonparametric statistics, and show that \emph{DLVMs can be seen as parsimonious submodels of nonparametric mixture models}. 
	\item We leverage this connection to provide a way of finding an \emph{upper bound of the likelihood based on finite mixtures}. Combined with the ELBO, this bound allows us to provide useful ``sandwichings'' of the exact likelihood. We also prove that this bound characterises the large capacity behaviour of DLVMs.
	\item When dealing with missing data, we show how a simple modification of an approximate scheme proposed by \citet{rezende2014} allows us to \emph{draw according to the exact conditional distribution of the missing data}. On several data sets and missing data scenarios, our algorithm consistently outperforms the one of \citet{rezende2014}, while having the same computational cost.
\end{itemize}

\section{Is maximum likelihood well-defined for deep latent variable models?}

In this section, we investigate the properties of maximum likelihood estimation for DLVMs with Gaussian and Bernoulli outputs.

\subsection{On the boundedness of the likelihood of deep latent variable models}

Deep generative models with Gaussian outputs assume that the data space is $\mathcal{X}=\mathbb{R}^p$,  and that the family of output distributions is the family of $p$-variate full-rank Gaussian distributions. The conditional distribution of each data point is consequently
\begin{equation}
p_{{\boldsymbol{\theta}}}(\mathbf{x}|\mathbf{z}) = \mathcal{N}(\mathbf{x}|{\boldsymbol{\mu}_{\boldsymbol{\theta}}}(\mathbf{z}),{\boldsymbol{\Sigma}_{\boldsymbol{\theta}}}(\mathbf{z})),
\end{equation}
where ${\boldsymbol{\mu}_{\boldsymbol{\theta}}} : \mathbb{R}^d \rightarrow \mathbb{R}^p$ and ${\boldsymbol{\Sigma}_{\boldsymbol{\theta}}} : \mathbb{R}^d \rightarrow \mathcal{S}_p^{++}$ are two continuous functions parametrised by neural networks whose weights are stored in a parameter $\boldsymbol{\theta}$. These two functions constitute the decoder of the model. This leads to the log-likelihood
\begin{equation}
\ell({\boldsymbol{\theta}})= \sum_{i=1}^n \log \left( \int_{\mathbb{R}^d} \mathcal{N}(\mathbf{x}_i|{\boldsymbol{\mu}_{\boldsymbol{\theta}}}(\mathbf{z}),{\boldsymbol{\Sigma}_{\boldsymbol{\theta}}}(\mathbf{z})) p(\mathbf{z})d\mathbf{z} \right).
\end{equation}
This model can be seen as a special case of \emph{infinite mixture of Gaussian distributions}. However, it is well-known that maximum likelihood is ill-posed for \emph{finite} Gaussian mixtures (see e.g. \citealp{lecam1990}). Indeed, the likelihood function is unbounded above, and its infinite maxima happen to be very poor generative models. This problematic behaviour of a model quite similar to DLVMs motivates the question: \emph{is the likelihood function of DLVMs bounded above?}

In this section, we will not make any particular parametric assumption about the prior distribution of the latent variable $\mathbf{z}$. Both simple isotropic Gaussian distributions and more complex learnable priors have been proposed, but all of them are affected by the negative results of this section. We simply make the natural assumptions that $\mathbf{z}$ is continuous and has zero mean. Many different neural architectures have been explored regarding the parametrisation of the decoder. For example, \citet{kingma2014} consider multilayer perceptrons (MLPs) of the form
\begin{equation} \label{eq:K1} {\boldsymbol{\mu}_{\boldsymbol{\theta}}}(\mathbf{z}) = \mathbf{V} \tanh \left(\mathbf{W}\mathbf{z} +  \mathbf{a} \right)   +  \mathbf{b}, \; {\boldsymbol{\Sigma}_{\boldsymbol{\theta}}}(\mathbf{z})  = \textup{Diag} \left( \exp \left({\boldsymbol{\alpha}} \tanh \left(\mathbf{W}\mathbf{z} +  \mathbf{a} \right)   + \boldsymbol{\beta} \right)\right),\end{equation}
where $\boldsymbol{\theta} = (\mathbf{W},\mathbf{a},\mathbf{V}, \mathbf{b},\boldsymbol{\alpha},\beta)$. The weights of the decoder are $\mathbf{W} \in \mathbb{R}^{h \times d},\mathbf{a} \in \mathbb{R}^{h},\mathbf{V},\boldsymbol{\alpha} \in \mathbb{R}^{p \times h}$, and $\mathbf{b}, \boldsymbol{\beta} \in  \mathbb{R}^p$. The integer $h \in \mathbb{N}^*$ is the (common) number of \emph{hidden units} of the MLPs. Much more complex parametrisations exist, but we will see that this one, arguably one of the most rigid, is already too flexible for maximum likelihood. Actually, we will show that an even much less flexible family of MLPs with a single hidden unit is problematic. Let $\mathbf{w} \in \mathbb{R}^p$ and let $(\alpha_k)_{k\geq 1}$ be a sequence of nonnegative real numbers such that $\alpha_k \rightarrow +\infty$ as $k \rightarrow +\infty$. For all $i \leq n$, with the parametrisation \eqref{eq:K1}, we consider the sequences of parameters $\boldsymbol{\theta}^{(i,\mathbf{w})}_{k} = (\alpha_k\mathbf{w}^T,-\alpha_k,0,0,\mathbf{x}_i,\alpha_k\mathbf{1}_p,-\alpha_k\mathbf{1}_p)$. This leads to the following simplified decoders:
\begin{equation} {\boldsymbol{\mu}_{\boldsymbol{\theta}^{(i,\mathbf{w})}_{k}}}(\mathbf{z}) = \mathbf{x}_i, \; {\boldsymbol{\Sigma}_{\boldsymbol{\theta}^{(i,\mathbf{w})}_{k}}}(\mathbf{z})  = \exp \left( \alpha_k \tanh \left(\alpha_k \mathbf{w}^T\mathbf{z}\right) - \alpha_k \right)\mathbf{I}_p.
\end{equation}
As shown by next theorem, these sequences of decoders lead to the divergence of the log-likelihood function.

\begin{theorem} For all $i \in \{1,...,n\}$ and $\mathbf{w} \in \mathbb{R}^d \setminus \{0\}$, we have
	$\lim_{k \rightarrow + \infty} \ell \left(\boldsymbol{\theta}^{(i,\mathbf{w})}_{k}\right) = + \infty. $
\end{theorem}
\begin{proof}
A detailed proof is provided in Appendix A. Its cornertone is the fact that the sequence of  functions ${\boldsymbol{\Sigma}_{\boldsymbol{\theta}^{(i,\mathbf{w})}_{k}}}$ converges to a function that outputs both singular and nonsingular covariances, leading to the explosion of $\log p_{\boldsymbol{\theta}_k}(\mathbf{x}_i)$ while all other terms in the likelihood remain bounded below by a constant.
\end{proof}

Using simple MLP-based parametrisations such a the one of \citet{kingma2014} therefore brings about an unbounded log-likelihood function. A natural question that follows is: do these infinite suprema lead to useful generative models? The answer is no. Actually, none of the functions considered in Theorem 1 are particularly useful, because of the use of a constant mean function. This is formalised in the next proposition, that exhibits a strong link between likelihood blow-up and the \emph{mode collapse} phenomenon.

\begin{proposition}
	For all $k \in \mathbb{N}^*$, $i \in \{1,...,n\}$, and $\mathbf{w} \in \mathbb{R}^d \setminus \{0\}$, the distribution $p_{\boldsymbol{\theta}^{(i,\mathbf{w})}_{k}}$ is spherically symmetric and unimodal around $\mathbf{x}_i$.
\end{proposition}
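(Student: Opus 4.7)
The plan is to exploit the special structure of the decoders $\boldsymbol{\theta}^{(i,\mathbf{w})}_k$: the mean function is the constant $\mathbf{x}_i$, and the covariance is isotropic, equal to $\sigma_k(\mathbf{z})^2 \mathbf{I}_p$ with $\sigma_k(\mathbf{z})^2 = \exp(\alpha_k \tanh(\alpha_k \mathbf{w}^T\mathbf{z}) - \alpha_k)$. Consequently, the marginal density can be written as a scale mixture of isotropic Gaussians all centered at the same point $\mathbf{x}_i$:
\begin{equation}
p_{\boldsymbol{\theta}^{(i,\mathbf{w})}_k}(\mathbf{x}) = \int_{\mathbb{R}^d} \mathcal{N}\!\bigl(\mathbf{x}\,\big|\,\mathbf{x}_i,\,\sigma_k(\mathbf{z})^2 \mathbf{I}_p\bigr)\, p(\mathbf{z})\, d\mathbf{z}.
\end{equation}
Both claimed properties (spherical symmetry and unimodality around $\mathbf{x}_i$) are properties of the density as a function of $\mathbf{x}$, so the idea is to establish them for each integrand pointwise in $\mathbf{z}$ and then pass them to the mixture.

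For spherical symmetry, I would observe that for every fixed $\mathbf{z} \in \mathbb{R}^d$, the density $\mathcal{N}(\mathbf{x}\,|\,\mathbf{x}_i, \sigma_k(\mathbf{z})^2 \mathbf{I}_p)$ depends on $\mathbf{x}$ only through $\|\mathbf{x}-\mathbf{x}_i\|$. Hence for every orthogonal matrix $\mathbf{O}$ acting on $\mathbb{R}^p$ about $\mathbf{x}_i$, the integrand is invariant, and therefore so is its integral against $p(\mathbf{z})$. Thus $p_{\boldsymbol{\theta}^{(i,\mathbf{w})}_k}(\mathbf{x})$ only depends on $\|\mathbf{x}-\mathbf{x}_i\|$.

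For unimodality, the same pointwise observation gives that, for any fixed $\mathbf{z}$, the map $\mathbf{x} \mapsto \mathcal{N}(\mathbf{x}\,|\,\mathbf{x}_i, \sigma_k(\mathbf{z})^2 \mathbf{I}_p)$ is a strictly decreasing function of $\|\mathbf{x}-\mathbf{x}_i\|$ and attains its maximum at $\mathbf{x}_i$. Taking the nonnegative mixture against $p(\mathbf{z}) \geq 0$ preserves this property: if $\|\mathbf{x}-\mathbf{x}_i\| \leq \|\mathbf{x}'-\mathbf{x}_i\|$, then the inequality holds for the integrand at every $\mathbf{z}$, hence after integrating, giving $p_{\boldsymbol{\theta}^{(i,\mathbf{w})}_k}(\mathbf{x}) \geq p_{\boldsymbol{\theta}^{(i,\mathbf{w})}_k}(\mathbf{x}')$, with the maximum attained at $\mathbf{x}=\mathbf{x}_i$.

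There is really no serious obstacle here; the only thing to be slightly careful about is the interpretation of ``unimodal around $\mathbf{x}_i$'' (which I take to mean that the density is a nonincreasing function of $\|\mathbf{x}-\mathbf{x}_i\|$, consistent with the spherically symmetric setting), and the use of Fubini/monotonicity to move the pointwise-in-$\mathbf{z}$ properties through the integral — both of which are immediate since $p(\mathbf{z}) \geq 0$ and the integrand is nonnegative.
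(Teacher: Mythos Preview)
Your proposal is correct and follows essentially the same approach as the paper: both argue that, because the mean is constant and the covariance isotropic, the marginal density depends on $\mathbf{x}$ only through $\|\mathbf{x}-\mathbf{x}_i\|_2$ and is decreasing in that quantity. The paper's proof is a one-line version of your argument; the only slight difference is that the paper explicitly adopts ``the only local maximum of the density is at $\mathbf{x}_i$'' as its definition of unimodality, which your monotonicity argument establishes once you note that the strict decrease of each Gaussian integrand together with $p(\mathbf{z})$ being a genuine probability density yields strict (not merely weak) monotonicity of the mixture.
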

\begin{proof}
Because of the constant mean function and the isotropic covariance, the density of $\mathbf{x}$ is a decreasing function $||\mathbf{x}-\mathbf{x}_i||_2$, hence the spherical symmetry and the unimodality. Note that there are several different definitions for multivariate unimodality (see e.g. \citealp{dharmadhikari1988}). Here, we mean that the only local maximum of the density is at $\mathbf{x}_i$.\end{proof}

The spherical symmetry implies that the distribution of these ``optimal'' deep generative model will lead to uncorrelated variables, and the unimodality will lead to poor sample diversity. This behaviour is symptomatic of mode collapse, which remains one of the most challenging drawbacks of generative modelling \citep{arora2018}. Unregularised gradient-based optimisation of a tight lower bound of this unbounded likelihood is therefore likely to chase these (infinitely many) bad suprema. This gives a theoretical foundation to the necessary regularisation of VAEs that was already noted by \citet{rezende2014} and \citet{kingma2014}. For example, using weight decay as in \citet{kingma2014} is likely to help avoiding these infinite maxima. This difficulty to learn the variance may also explain the choice made by several authors to use a constant variance function of the form $ \boldsymbol{\Sigma}( \mathbf{z}) = \sigma_0 \mathbf{I}_p$, where $\sigma_0$ can be either fixed \citep{zhao2017} or learned via approximate maximum likelihood \citep{pu2016}.

\paragraph{Tackling the unboundedness of the likelihood.}
Let us go back to a parametrisation which is not necessarily MLP-based. Even in this general context, it is possible to tackle the unboundedness of the likelihood using additional constraints on ${\boldsymbol{\Sigma}_{\boldsymbol{\theta}}}$. Specifically, for each $\xi \geq 0$, we will consider the set
$
\mathcal{S}_p^\xi = \{\mathbf{A} \in \mathcal{S}_p^+ | \min (\textup{Sp}\mathbf{A}) \geq \xi \}.
$
Note that $\mathcal{S}_p^0=\mathcal{S}_p^+$. This simple spectral constraint allows to end up with a bounded likelihood. 
\begin{proposition} \label{prop:constraints}
	Let $\xi>0$. If the parametrisation of the decoder is such that the image of ${\boldsymbol{\Sigma}_{\boldsymbol{\theta}}}$ is included in $\mathcal{S}_p^\xi$ for all $\boldsymbol{\theta}$, then the log-likelihood function is upper bounded by $-np\log \sqrt{2\pi \xi}$.
\end{proposition}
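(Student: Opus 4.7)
The plan is to bound the integrand in the marginal likelihood pointwise in $\mathbf{z}$ using the spectral constraint on $\boldsymbol{\Sigma}_{\boldsymbol{\theta}}$, and then integrate trivially.

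First I would recall that for any $p$-variate Gaussian density, $\mathcal{N}(\mathbf{x}|\boldsymbol{\mu},\boldsymbol{\Sigma}) \leq (2\pi)^{-p/2} \det(\boldsymbol{\Sigma})^{-1/2}$, with the maximum attained at the mean $\boldsymbol{\mu}$. Under the assumption that $\boldsymbol{\Sigma}_{\boldsymbol{\theta}}(\mathbf{z}) \in \mathcal{S}_p^\xi$ for every $\mathbf{z}$, all eigenvalues of $\boldsymbol{\Sigma}_{\boldsymbol{\theta}}(\mathbf{z})$ are at least $\xi$, so $\det \boldsymbol{\Sigma}_{\boldsymbol{\theta}}(\mathbf{z}) \geq \xi^p$ and therefore
\begin{equation*}
p_{\boldsymbol{\theta}}(\mathbf{x}|\mathbf{z}) = \mathcal{N}(\mathbf{x}|\boldsymbol{\mu}_{\boldsymbol{\theta}}(\mathbf{z}),\boldsymbol{\Sigma}_{\boldsymbol{\theta}}(\mathbf{z})) \leq (2\pi\xi)^{-p/2}
\end{equation*}
uniformly in $\mathbf{z}$ and in $\mathbf{x}$.

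Next I would marginalise: since the bound above is independent of $\mathbf{z}$ and $p(\mathbf{z})$ is a probability density, integrating against $p(\mathbf{z})$ yields $p_{\boldsymbol{\theta}}(\mathbf{x}) \leq (2\pi\xi)^{-p/2}$ for every $\mathbf{x} \in \mathbb{R}^p$. Taking logarithms and summing over the $n$ observations gives
\begin{equation*}
\ell(\boldsymbol{\theta}) = \sum_{i=1}^{n} \log p_{\boldsymbol{\theta}}(\mathbf{x}_i) \leq -\tfrac{np}{2}\log(2\pi\xi) = -np\log\sqrt{2\pi\xi},
\end{equation*}
which is the desired bound.

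There is no real obstacle here: the argument is entirely elementary once one notices that the spectral lower bound on $\boldsymbol{\Sigma}_{\boldsymbol{\theta}}$ controls the determinant, and hence the peak value of the Gaussian density, uniformly over $\mathbf{z}$ and $\boldsymbol{\theta}$. The only thing worth noting is that the conclusion does not require any assumption on $\boldsymbol{\mu}_{\boldsymbol{\theta}}$ or on the prior $p(\mathbf{z})$ other than it being a probability measure, which matches the generality claimed in the statement.
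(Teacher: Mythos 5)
Your proof is correct and follows essentially the same route as the paper's: bound the Gaussian density pointwise by $(2\pi\xi)^{-p/2}$ via the determinant lower bound $\det\boldsymbol{\Sigma}_{\boldsymbol{\theta}}(\mathbf{z})\geq\xi^p$, integrate against the prior, and sum the logarithms over the $n$ observations. No gaps.
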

\begin{proof}
	For all $i \in \{1,...,n\}$, we have
	\begin{equation*}
	p(\mathbf{x}_i|{\boldsymbol{\mu}_{\boldsymbol{\theta}}},{\boldsymbol{\Sigma}_{\boldsymbol{\theta}}}) =\int_{\mathbb{R}^d} \mathcal{N}(\mathbf{x}|{\boldsymbol{\mu}_{\boldsymbol{\theta}}}(\mathbf{z}),{\boldsymbol{\Sigma}_{\boldsymbol{\theta}}}(\mathbf{z})) \mathcal{N}(\mathbf{z}|0,\mathbf{I}_d) d \mathbf{z} \leq \frac{1}{(2\pi \xi)^{p/2}},
	\end{equation*}
	using the fact that the determinant of ${\boldsymbol{\Sigma}_{\boldsymbol{\theta}}}(\mathbf{z})$ is lower bounded by $\xi^p$ for all $\mathbf{z} \in \mathbb{R}^d$ and that the exponential of a negative number is smaller than one. Therefore, the likelihood function is bounded above by $1/(2\pi \xi)^{np/2}$.
\end{proof}
Similar constraints have been proposed to solve the ill-posedness of maximum likelihood for finite Gaussian mixtures (e.g. \citealp{hathaway1985,biernacki2011}). In practice, implementing such constraints can be easily done by adding a constant diagonal matrix to the output of the covariance decoder.

\paragraph{What about other parametrisations?} We chose a specific and natural parametrisation in order to obtain a constructive proof of the unboundedness of the likelihood. However, virtually any other deep parametrisation that does not include covariance constraints will be affected by our result, because of the universal approximation abilities of neural networks (see e.g. \citealp[Section 6.4.1]{goodfellow2016}).

\paragraph{Bernoulli DLVMs do not suffer from unbounded likelihood.}
When $\mathcal{X}=\{0,1\}^p$, Bernoulli DLVMs assume that $(\Phi (\cdot|\boldsymbol{\eta}))_{\boldsymbol{\eta}\in H}$ is the family of $p$-variate multivariate Bernoulli distributions (i.e. the family of products of $p$ univariate Bernoulli distributions). In this case, maximum likelihood is well-posed.

\begin{proposition}
	Given any possible parametrisation, the log-likelihood function of a deep latent model with Bernoulli outputs is everywhere negative.
\end{proposition}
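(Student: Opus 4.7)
The key point is that for discrete data a probability mass function is bounded above by $1$, so the log-likelihood is automatically non-positive; this is the clean mirror image of the Gaussian unboundedness just established. I would write the proof in three very short steps.

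First, I would note that for any parameter $\boldsymbol{\eta} \in H$ and any observation $\mathbf{x} \in \{0,1\}^p$, the multivariate Bernoulli density is a product of $p$ factors of the form $\eta_j^{x_j}(1-\eta_j)^{1-x_j}$ with $\eta_j \in [0,1]$, and is therefore bounded above by $1$. This is the only structural fact about the Bernoulli family that the argument uses, so no assumption on the parametrisation of $f_{\boldsymbol{\theta}}$ or on the prior is needed.

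Next, I would integrate against $p(\mathbf{z})$ to obtain, for every $\boldsymbol{\theta} \in \boldsymbol{\Theta}$ and every $i \in \{1,\dots,n\}$,
\begin{equation*}
p_{\boldsymbol{\theta}}(\mathbf{x}_i) \;=\; \int_{\mathbb{R}^d} \Phi(\mathbf{x}_i | f_{\boldsymbol{\theta}}(\mathbf{z}))\, p(\mathbf{z})\, d\mathbf{z} \;\leq\; \int_{\mathbb{R}^d} p(\mathbf{z})\, d\mathbf{z} \;=\; 1,
\end{equation*}
which yields $\log p_{\boldsymbol{\theta}}(\mathbf{x}_i) \leq 0$. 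Summing over $i$ gives $\ell(\boldsymbol{\theta}) \leq 0$ as desired. If one reads ``negative'' strictly, I would add a one-line remark: whenever the decoder outputs Bernoulli parameters lying in the open interval $(0,1)^p$ (the generic case, e.g.\ any sigmoid-activated output layer), the inequality $\Phi(\mathbf{x}_i|f_{\boldsymbol{\theta}}(\mathbf{z})) < 1$ is strict for every $\mathbf{z}$, so $p_{\boldsymbol{\theta}}(\mathbf{x}_i) < 1$ and $\ell(\boldsymbol{\theta}) < 0$.

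There is essentially no obstacle here; the only subtlety worth flagging is the ``$\leq$'' vs.\ ``$<$'' distinction in the word \emph{negative}, which is resolved by the mild remark above. The proof is meant primarily as a contrast with the Gaussian case: replacing the Lebesgue density (unbounded) by a counting-measure density (bounded by $1$) is exactly what prevents the pathological blow-up exhibited in Theorem~1.
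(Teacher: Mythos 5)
Your proof is correct and is essentially the paper's own argument: the paper's one-line proof simply invokes the fact that a Bernoulli density is always at most one, which is exactly the bound $\Phi(\mathbf{x}_i|f_{\boldsymbol{\theta}}(\mathbf{z})) \leq 1$ you integrate against the prior. Your extra remark on the strict inequality (needed to read ``negative'' literally) is a reasonable clarification but does not change the route.
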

\begin{proof}
	This is a direct consequence of the fact that the density of a Bernoulli distribution is always smaller than one.
\end{proof}

\subsection{Towards data-dependent likelihood upper bounds}

We have determined under which conditions maximum likelihood estimates exist, and have computed simple upper bounds on the likelihood functions. Since they do not depend on the data, these bounds are likely to be very loose. A natural follow-up issue is to seek tighter, data-dependent upper bounds that remain easily computable. Such bounds are desirable because, combined with ELBOs, they would allow sandwiching the likelihood between two bounds.

To study this problem, let us take a step backwards and consider a more general infinite mixture model. Precisely, given any distribution $G$ over the generic parameter space $H$, we define the \emph{nonparametric mixture model} (see e.g. \citealp[Chapter 1]{lindsay1995}) as:
\begin{equation}
p_G(\mathbf{x}) =   \int_{H} \Phi (\mathbf{x}|\boldsymbol{\eta})dG(\boldsymbol{\eta}).
\end{equation}
Note that there are many ways for a mixture model to be nonparametric (e.g. having some nonparametric components, an infinite but countable number of components,  or un uncountable number of components). In this case, this comes from the fact that the model parameter is the \emph{mixing distribution} $G$, which belongs to the set $\mathcal{P}$ of all probability measures over $H$. The likelihood of any $G \in \mathcal{P}$ is given by
\begin{equation}
\ell(G) =  \sum_{i=1}^n\log p_G(\mathbf{x}_i).
\end{equation}

When $G$ has a finite support of cardinal $k \in \mathbb{N}^*$, $p_G$ is a finite mixture model with $k$ components. When the mixing distribution $G$ is generatively defined by the distribution of a random variable $\boldsymbol{\eta}$ such that 
$
\mathbf{z} \sim p(\mathbf{z}), \; \boldsymbol{\eta} = f_{\boldsymbol{\theta}}(\mathbf{z}),
$
we exactly end up with a deep generative model with decoder $f_{\boldsymbol{\theta}}$. Therefore, the nonparametric mixture is a more general model that the DLVM. The fact that the mixing distribution of a DLVM is intrinsically low-dimensional leads us to interpret the DLVM as a \emph{parsimonious submodel of the nonparametric mixture model}. This also gives us an immediate upper bound on the likelihood of any decoder $f_{\boldsymbol{\theta}}$:
$
\ell({\boldsymbol{\theta}}) \leq \max_{G \in \mathcal{P}}\ell (G).
$

Of course, in many cases, this upper bound will be infinite (for example in the case of unconstrained Gaussian outputs). However, under the conditions of boundedness of the likelihood of deep Gaussian models, the bound is finite and attained for a finite mixture model with at most $n$ components.
\begin{theorem}
	Assume that $(\Phi (\cdot|\boldsymbol{\eta}))_{\boldsymbol{\eta}\in H}$ is the family of multivariate Bernoulli distributions or the family of Gaussian distributions with the spectral constraint of Proposition \ref{prop:constraints}. The likelihood of the corresponding nonparametric mixture model is maximised for a finite mixture model of $k \leq n$ distributions from the family $(\Phi (\cdot|\boldsymbol{\eta}))_{\boldsymbol{\eta}\in H}$.
\end{theorem}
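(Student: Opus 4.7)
The plan is to recast maximization over the infinite-dimensional set $\mathcal{P}$ as a finite-dimensional convex-geometric problem, then apply Carath\'eodory's theorem. Define the \emph{likelihood map} $L:\mathcal{P}\to\mathbb{R}^n$ by $L(G)=(p_G(\mathbf{x}_1),\ldots,p_G(\mathbf{x}_n))$. Because each coordinate $p_G(\mathbf{x}_i)=\int_H\Phi(\mathbf{x}_i|\boldsymbol{\eta})\,dG(\boldsymbol{\eta})$ is linear in $G$, the image $\Gamma=L(\mathcal{P})$ is convex and in fact equals the convex hull of the \emph{likelihood curve} $\gamma=\{(\Phi(\mathbf{x}_1|\boldsymbol{\eta}),\ldots,\Phi(\mathbf{x}_n|\boldsymbol{\eta})):\boldsymbol{\eta}\in H\}\subset\mathbb{R}^n$ (every $G$ being a mixture of Dirac masses). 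Crucially, $\ell(G)=F(L(G))$ with $F(u_1,\ldots,u_n)=\sum_{i=1}^n\log u_i$ strictly concave and strictly coordinatewise increasing on $\mathbb{R}_{>0}^n$, so the task reduces to maximizing $F$ over the convex set $\Gamma\subset\mathbb{R}^n$.

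Next I would establish existence of a maximizer. In the Bernoulli case, the natural parameter space is the compact cube $[0,1]^p$, so $\gamma$ is the continuous image of a compact set and $\Gamma=\mathrm{conv}(\gamma)$ is compact. In the Gaussian case with the spectral constraint of Proposition~\ref{prop:constraints}, $H$ is non-compact, but $\Phi(\mathbf{x}_i|\boldsymbol{\eta})\to 0$ uniformly in $i$ as $\|\boldsymbol{\mu}\|$ or $\|\boldsymbol{\Sigma}\|_{\mathrm{op}}$ diverges, so $\bar{\gamma}=\gamma\cup\{\mathbf{0}\}$ is compact and $\mathrm{conv}(\bar{\gamma})$ is compact. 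Since $F$ is coordinatewise increasing, an optimizer cannot place positive mass on the artificial atom $\mathbf{0}$ -- doing so would scale all coordinates of $u$ down and strictly decrease $F$ -- so a maximizer $u^*$ lies in $\Gamma$ itself and corresponds to a genuine mixing distribution $G^*\in\mathcal{P}$.

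By Carath\'eodory's theorem in $\mathbb{R}^n$, any point of $\mathrm{conv}(\bar{\gamma})$ is a convex combination of at most $n+1$ elements of $\bar{\gamma}$; after discarding the vanishing coefficient on $\mathbf{0}$, this writes $u^*=\sum_{k=1}^K\lambda_k v_k$ with $v_k\in\gamma$ and $K\leq n+1$. To sharpen to $K\leq n$ I would show that $u^*$ lies on the boundary of $\Gamma$: otherwise a small perturbation of $u^*$ in the direction $+\mathbf{e}_1$ would remain in $\Gamma$ and strictly increase $F$, contradicting optimality. Hence $u^*$ belongs to some supporting hyperplane $H_0$ of $\Gamma$, and lies in the $(n-1)$-dimensional convex slice $\Gamma\cap H_0$; applying Carath\'eodory inside this slice expresses $u^*$ as a convex combination of at most $n$ extreme points of $\Gamma\cap H_0$, which are extreme points of $\Gamma$ and thus belong to $\gamma$. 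The resulting discrete mixing distribution $G^*=\sum_{k=1}^K\lambda_k\delta_{\boldsymbol{\eta}_k}$ with $K\leq n$ then realizes the supremum.

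The delicate step will be existence in the Gaussian case: the likelihood curve is not itself closed and one must rule out escape of mass to infinity, which is precisely what the compactification $\bar{\gamma}=\gamma\cup\{\mathbf{0}\}$ accomplishes. This argument quietly uses the spectral constraint $\boldsymbol{\Sigma}\in\mathcal{S}_p^\xi$ to prevent variance shrinkage to zero (which would otherwise blow $\Phi$ up at data points), guaranteeing that $\Phi(\mathbf{x}_i|\boldsymbol{\eta})$ is uniformly bounded and vanishes as the parameters diverge. Everything after existence -- linearity of $L$, Carath\'eodory, and the boundary refinement -- is standard convex geometry.
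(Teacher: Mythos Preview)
Your proof is correct and follows essentially the same route as the paper: the paper invokes \citet[Theorem 3.1]{lindsay1983}, whose proof is precisely the Carath\'eodory-plus-boundary argument you spell out, and for the Gaussian case both you and the paper handle non-compactness of $H$ by adjoining the single limit point $\mathbf{0}$ (the paper phrases this as the Alexandroff one-point compactification of $H$, you as closing $\gamma$ directly in $\mathbb{R}^n$). The only difference is packaging --- the paper cites Lindsay as a black box while you reproduce his convex-geometric argument from scratch --- so your version is more self-contained but not a genuinely different approach.
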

\begin{proof}
A detailed proof is provided in Appendix B. The main tool of the proof of this rather surprising result is Lindsay's \citeyearpar{lindsay1983} geometric analysis of the likelihood of nonparametric mixtures, based on Minkovski's theorem. Specifically, Lindsay's \citeyearpar{lindsay1983} Theorem 3.1 ensures that, when the trace of the likelihood curve is compact, the likelihood function is maximised for a finite mixture. For the Bernoulli case, compactness of the curve is immediate; for the Gaussian case, we use a compactification argument inspired by \cite{van1992}.
\end{proof}

\begin{wrapfigure}{r}{0.50\textwidth}
	\vspace{-3mm}
	\centering
	\includegraphics[width=0.50\columnwidth]{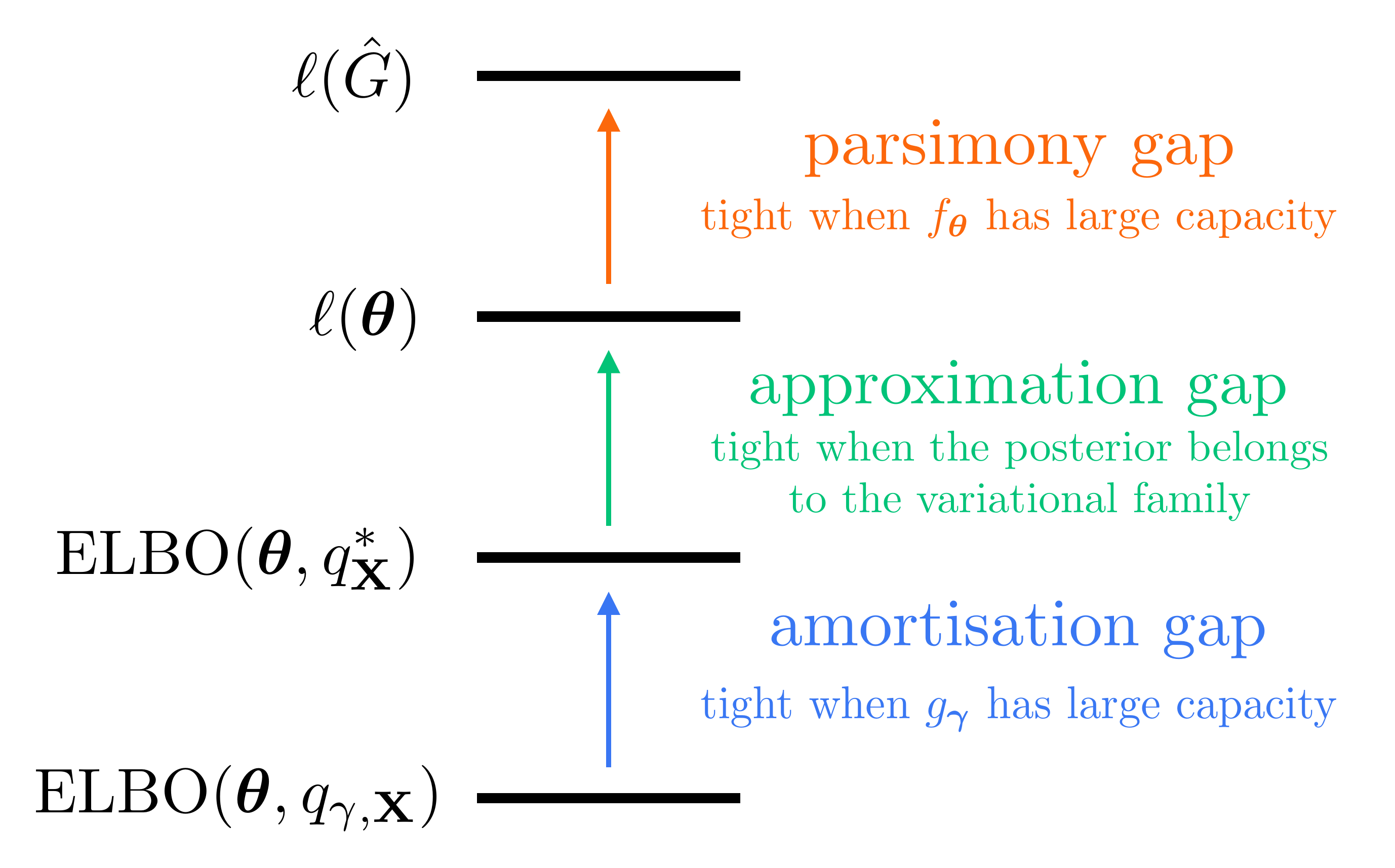}
	\caption{The parsimony gap represents the amount of likelihood lost due to the architecture of the decoder. The approximation gap expresses how far the posterior is from the variational family, and the amortisation gap appears due to the limited capacity of the encoder \citep{cremer2018}.}
	\label{fig:gaps}
\end{wrapfigure}

Assume now that the conditions of Theorem 2 are satisfied. Let us denote a maximum likelihood estimate of $G$ as $\hat{G}$.
For all $\boldsymbol{\theta}$, we therefore have \begin{equation}
\ell(\boldsymbol{\theta}) \leq \ell (\hat{G}),
\end{equation}
which gives an \emph{upper bound} on the likelihood. We call the difference $\ell (\hat{G}) - \ell(\boldsymbol{\theta}) $ the \emph{parsimony gap} (see Fig. \ref{fig:gaps}). By sandwiching the exact likelihood between this bound and an ELBO, we can also have guarantees on how far a posterior approximation $q$ is from the true posterior:
\begin{equation}
\textup{KL}(q||p(\cdot | \mathbf{X}))  \leq \ell (\hat{G}) - \textup{ELBO}(\boldsymbol{\theta},q).
\end{equation}

From a computational perspective, the estimate $\hat{G}$ can be found using the expectation-maximisation algorithm for finite mixtures \citep{dempster1977} -- although it only ensures to find a local optimum. Some strategies guaranteed to find a global optimum have also been developed (e.g. \citealp[Chapter 6]{lindsay1995}, or \citealp{wang2007}).


Now that computationally approachable upper bounds have been derived, the question remains whether or not these bounds can be tight. Actually, as shown by next theorem, tightness of the parsimony gap occurs when the decoder has universal approximation abilities. In other words, \emph{the nonparametric upper bound characterises the large capacity limit of the decoder.}

\begin{theorem}[\textbf{Tightness of the parsimony gap}]
	\label{th:nonpara}
	Assume that
	
		1. $(\Phi (\cdot|\boldsymbol{\eta}))_{\boldsymbol{\eta}\in H}$ is the family of multivariate Bernoulli distributions or the family of Gaussian distributions with the spectral constraint of Proposition \ref{prop:constraints},
		
		2. The decoder has universal approximation abilities: for any compact $C \subset  \mathbb{R}^d$ and continuous function $f: C \rightarrow H$, for all $\varepsilon >0$, there exists $\boldsymbol{\theta} \in \boldsymbol{\Theta}$ such that $||f - f_{\boldsymbol{\theta}} ||_\infty < \varepsilon$.
	
	Then, for all $\varepsilon >0$, there exists $\boldsymbol{\theta} \in \boldsymbol{\Theta}$ such that
	$    \ell(\hat{G}) \geq \ell(\boldsymbol{\theta}) \geq \ell(\hat{G}) - \varepsilon$.
\end{theorem}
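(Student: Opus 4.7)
The inequality $\ell(\boldsymbol{\theta}) \leq \ell(\hat{G})$ is free, as already observed in the excerpt: the pushforward of $p(\mathbf{z})$ through $f_{\boldsymbol{\theta}}$ is some $G_{\boldsymbol{\theta}} \in \mathcal{P}$ satisfying $p_{\boldsymbol{\theta}} = p_{G_{\boldsymbol{\theta}}}$, so $\ell(\boldsymbol{\theta}) = \ell(G_{\boldsymbol{\theta}}) \leq \ell(\hat{G})$. All the work is in the other direction. The plan is, given $\varepsilon > 0$, to build a continuous $f$ on a suitable compact $C \subset \mathbb{R}^d$ whose pushforward of $p(\mathbf{z})$ nearly recovers $\hat{G}$, realise it by a decoder via the universal approximation hypothesis, and then control the resulting gap in $\ell$ point by point.

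By Theorem~2, write $\hat{G} = \sum_{j=1}^k \pi_j \delta_{\boldsymbol{\eta}_j}$ with $k \leq n$ and $\pi_j > 0$. Since $\mathbf{z}$ has a continuous distribution, I would partition $\mathbb{R}^d$ into measurable sets $R_1,\dots,R_k$ with $p(R_j) = \pi_j$. Given $\delta > 0$, inner regularity supplies pairwise-disjoint compact $K_j \subset R_j$ with $p(K_j) \geq \pi_j - \delta$, all contained in a larger compact $C$ with $p(C) \geq 1 - \delta$. I would then construct a continuous $f : C \to H$ with $f \equiv \boldsymbol{\eta}_j$ on $K_j$, using a bump-function / partition-of-unity interpolation so that $f(C)$ lies in the convex hull $H_0$ of $\{\boldsymbol{\eta}_1,\dots,\boldsymbol{\eta}_k\}$. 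This hull sits inside $H$ in both admissible cases, as $[0,1]^p$ and $\mathbb{R}^p \times \mathcal{S}_p^\xi$ are convex. The universal approximation hypothesis then yields, for any $\eta > 0$, a $\boldsymbol{\theta}$ with $\sup_{\mathbf{z}\in C}\|f_{\boldsymbol{\theta}}(\mathbf{z}) - f(\mathbf{z})\| < \eta$.

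Decomposing $p_{\boldsymbol{\theta}}(\mathbf{x}_i) = \int \Phi(\mathbf{x}_i | f_{\boldsymbol{\theta}}(\mathbf{z}))p(\mathbf{z}) d\mathbf{z}$ into integrals over $\bigcup_j K_j$, $C \setminus \bigcup_j K_j$, and $\mathbb{R}^d \setminus C$, the last two contribute at most $2M\delta$ in total, where $M$ is the supremum of $\Phi(\mathbf{x}_i|\cdot)$ on the closed $\eta$-neighbourhood of $H_0$: finite, and in fact bounded by $1$ in the Bernoulli case and by $(2\pi\xi)^{-p/2}$ in the Gaussian case (by Proposition~\ref{prop:constraints}). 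On $\bigcup_j K_j$, uniform continuity of $\Phi(\mathbf{x}_i|\cdot)$ on that compact neighbourhood turns the $\eta$-closeness of $f_{\boldsymbol{\theta}}$ to $f$ into an $\omega(\eta)$-approximation of $\int_{\bigcup_j K_j}\Phi(\mathbf{x}_i|f(\mathbf{z}))p(\mathbf{z})d\mathbf{z} = \sum_j p(K_j)\Phi(\mathbf{x}_i|\boldsymbol{\eta}_j)$, itself within $M\delta$ of $p_{\hat{G}}(\mathbf{x}_i)$. Since $p_{\hat{G}}(\mathbf{x}_i) > 0$ for each $i$ (else $\ell(\hat{G}) = -\infty$, contradicting the finiteness guaranteed by Theorem~2), $\log$ is continuous there, and a joint small choice of $\delta,\eta$ forces $|\ell(\boldsymbol{\theta}) - \ell(\hat{G})| \leq \varepsilon$. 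The main difficulty is not any single step but the bookkeeping: tuning $\delta$ and $\eta$ jointly so that the $n$ log-factor errors simultaneously fall below $\varepsilon$, while ensuring that the interpolating values of $f$ remain in the admissible parameter space $H$.
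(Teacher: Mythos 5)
Your proof is correct and follows essentially the same route as the paper's: realise the finite mixture $\hat{G}$ (guaranteed by Theorem~2) by a decoder that is nearly constant equal to $\boldsymbol{\eta}_j$ on disjoint compact regions of prior mass close to $\pi_j$, invoke universal approximation on their compact union, and transfer the pointwise closeness of $p_{\boldsymbol{\theta}}(\mathbf{x}_i)$ to $p_{\hat{G}}(\mathbf{x}_i)$ through the continuity of $\log$. The only differences are cosmetic --- the paper carves the regions as spherical shells via the radial quantile function of $||\mathbf{z}||_2$ and extends $f$ by Tietze's theorem, whereas you use inner regularity of the prior and a partition-of-unity interpolation into the convex hull of the $\boldsymbol{\eta}_j$, which incidentally handles more cleanly the requirement that the interpolant stay inside $H$.
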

\begin{proof}
A detailed proof is provided in Appendix C. The main idea is to split the code space into a compact set made of several parts that will represent the mixture components, and an unbounded set of very small prior mass. The universal approximation property is finally used for this compact set.
\end{proof}

The universal approximation condition is satisfied for example by MLPs with nonpolynomial activations \citep{leshno1993}.
Combined with the work of \citet{cremer2018}, who studied the large capacity limit of the \emph{encoder}, this result describes the general behaviour of a VAE in the large capacity limit (see Fig. \ref{fig:gaps}).

\section{Missing data imputation using the exact conditional likelihood}

In this section, we assume that a variational autoencoder  has been trained, and that some data is missing at test time. The couple decoder/encoder obtained after training is denoted by $f_{\boldsymbol{\theta}}$ and $g_{\boldsymbol{\gamma}}$. Let $\mathbf{x} \in \mathcal{X}$ be a new data point that consists in some observed features $\mathbf{x}^\textup{obs}$ and missing data $\mathbf{x}^\textup{miss}$.
Since we have a probabilistic model $p_{\boldsymbol{\theta}}$ of the data, an ideal way of imputing $\mathbf{x}^\textup{miss}$ would be to generate some data according to the \emph{conditional likelihood}
\begin{equation}
p_{\boldsymbol{\theta}}(\mathbf{x}^\textup{miss} | \mathbf{x}^\textup{obs}) = \int_{\mathbb{R}^d} p_{\boldsymbol{\theta}}(\mathbf{x}^\textup{miss} |  \mathbf{x}^\textup{obs},\mathbf{z}) p(\mathbf{z}| \mathbf{x}^\textup{obs})d\mathbf{z}.\end{equation}
Again, this distribution appears out of reach because of the integration of the latent variable $\mathbf{z}$. However, it is reasonable to assume that, for all $\mathbf{z} \in \mathbb{R}^d$,
$p_{\boldsymbol{\theta}}(\mathbf{x}^\textup{miss} |  \mathbf{x}^\textup{obs},\mathbf{z})$
is easily computable (this is the case for Bernoulli or Gaussian outputs). Under this assumption, we will see that generating data according to the conditional distribution is actually possible.

\subsection{Pseudo-Gibbs sampling}

\citet{rezende2014} proposed a simple way of imputing $\mathbf{x}^\textup{miss}$ by following a Markov chain $(\mathbf{z}_t,\hat{\mathbf{x}}^\textup{miss}_{t})_{t\geq 1}$(initialised by randomly imputing the missing data with $\hat{\mathbf{x}}^\textup{miss}_0$). For all $t\geq 1$, the chain alternatively generates $\mathbf{z}_t \sim \Psi(\mathbf{z}|g_{\boldsymbol{\gamma}}(\mathbf{x}^\textup{obs},\hat{\mathbf{x}}^\textup{miss}_{t-1}))$ and $ \hat{\mathbf{x}}^\textup{miss}_{t} \sim \Phi(\mathbf{x}^\textup{miss}|\mathbf{x}^\textup{obs}, f_{\boldsymbol{\theta}}(\mathbf{z}_{t}))$ until convergence.
This scheme closely resembles Gibbs sampling \citep{Geman1984}, and actually exactly coincides with Gibbs sampling when the amortised variational distribution $\Psi(\mathbf{z}|g_{\boldsymbol{\gamma}}(\mathbf{x}^\textup{obs},\hat{\mathbf{x}}^\textup{miss}))$ is equal to the true posterior distribution $p_{\boldsymbol{\theta}}(\mathbf{z}|\mathbf{x}^\textup{obs},\hat{\mathbf{x}}^\textup{miss})$ for all possible $\hat{\mathbf{x}}^\textup{miss}$. Following the terminology of \citet{heckerman2000}, we will call this algorithm \emph{pseudo-Gibbs sampling}. Very similar schemes have been proposed for more general autoencoder settings \citep[Section 20.11]{goodfellow2016}. Because of its flexibility, this pseudo-Gibbs approach is routinely used for missing data imputation using DLVMs (see e.g. \citealp{li2016,rezende2016,du2018}).
\citet[Proposition F.1]{rezende2014} proved that, when these two distributions are close in some sense, pseudo-Gibbs sampling generates points that approximatively follow the conditional distribution $p_{\boldsymbol{\theta}}(\mathbf{x}^\textup{miss} | \mathbf{x}^\textup{obs})$. Actually, we will see that a simple modification of this scheme allows to generate \emph{exactly} according to the conditional distribution.

\subsection{Metropolis-within-Gibbs sampling}

At each step of the chain, rather than generating codes according to the approximate posterior distribution, we may \emph{use this approximation as a proposal within a Metropolis-Hastings algorithm} \citep{Metropolis1953,Hastings1970}, using the fact that we have access to the unnormalised posterior density of the latent codes.


Specifically, at each step, we will generate a new code $\tilde{\mathbf{z}}_t$ as a proposal using the approximate posterior $\Psi(\mathbf{z}|g_{\boldsymbol{\gamma}}(\mathbf{x}^\textup{obs},\hat{\mathbf{x}}^\textup{miss}_{t-1}))$. This proposal is kept as a valid code with acceptance probability 
\begin{equation}
\rho_t = \frac{\Phi (\mathbf{x}^\textup{obs},\hat{\mathbf{x}}^\textup{miss}_{t-1}| f_{\boldsymbol{\theta}}(\tilde{\mathbf{z}}_t ) ) p(\tilde{\mathbf{z}}_t)}{\Phi (\mathbf{x}^\textup{obs},\hat{\mathbf{x}}^\textup{miss}_{t-1}| f_{\boldsymbol{\theta}}(\mathbf{z}_{t-1} ) )  p(\mathbf{z}_{t-1})}\frac{ \Psi(\mathbf{z}_{t-1}|g_{\boldsymbol{\gamma}}(\mathbf{x}^\textup{obs},\hat{\mathbf{x}}^\textup{miss}_{t-1}))}{ \Psi(\tilde{\mathbf{z}}_{t}|g_{\boldsymbol{\gamma}}(\mathbf{x}^\textup{obs},\hat{\mathbf{x}}^\textup{miss}_{t-1}))}.
\end{equation} This probability corresponds to a \emph{ratio of importance ratios}, and is equal to one when the posterior approximation is perfect. This code-generating scheme exactly corresponds to performing a single iteration of an \emph{independent Metropolis-Hastings algorithm}. With the obtained code $\mathbf{z}_t$, we can now generate a new imputation using the exact conditional $ \Phi(\mathbf{x}^\textup{miss}|\mathbf{x}^\textup{obs}, f_{\boldsymbol{\theta}}(\mathbf{z}_{t}))$. The obtained algorithm, detailed in Algorithm \ref{alg:mwg}, is a particular instance of a Metropolis-within-Gibbs algorithm. Actually, it exactly corresponds to the algorithm described by \citet[Section 4.4]{gelman1993}, and is \emph{ensured to asymptotically produce samples from the true conditional ditribution} $p_{\boldsymbol{\theta}}(\mathbf{x}^\textup{miss} | \mathbf{x}^\textup{obs})$, even if the variational approximation is imperfect. Note that when the variational approximation is perfect, all proposals are accepted and the algorithm exactly reduces to Gibbs sampling.

	\begin{wrapfigure}{R}{0.57\textwidth}
	\vspace{-9mm}
	\begin{minipage}{0.57\textwidth}
		\begin{algorithm}[H]
			\caption{Metropolis-within-Gibbs sampler for missing data imputation using a trained VAE}
			\label{alg:mwg}
			\begin{algorithmic}
				\STATE {\bfseries Inputs:} Observed data $\mathbf{x}^\textup{obs}$, trained VAE $(f_{\boldsymbol{\theta}},g_{\boldsymbol{\gamma}})$, number of iterations $T$
				\\
				\STATE \textbf{Initialise} $(\mathbf{z}_0,\hat{\mathbf{x}}^\textup{miss}_{0})$ 
				\FOR{$t=1$ {\bfseries to} $T$}
				\STATE $\tilde{\mathbf{z}}_t \sim \Psi(\mathbf{z}|g_{\boldsymbol{\gamma}}(\mathbf{x}^\textup{obs},\hat{\mathbf{x}}^\textup{miss}_{t-1}))$
				\STATE $\rho_t= \frac{\Phi (\mathbf{x}^\textup{obs},\hat{\mathbf{x}}^\textup{miss}_{t-1}| f_{\boldsymbol{\theta}}(\tilde{\mathbf{z}}_t ) ) p(\tilde{\mathbf{z}}_t)}{\Phi (\mathbf{x}^\textup{obs},\hat{\mathbf{x}}^\textup{miss}_{t-1}| f_{\boldsymbol{\theta}}(\mathbf{z}_{t-1} ) )  p(\mathbf{z}_{t-1})}\frac{ \Psi(\mathbf{z}_{t-1}|g_{\boldsymbol{\gamma}}(\mathbf{x}^\textup{obs},\hat{\mathbf{x}}^\textup{miss}_{t-1}))}{ \Psi(\tilde{\mathbf{z}}_{t}|g_{\boldsymbol{\gamma}}(\mathbf{x}^\textup{obs},\hat{\mathbf{x}}^\textup{miss}_{t-1}))}$
				\STATE $\mathbf{z}_t=\left\{
				\begin{array}{lll}
				\tilde{\mathbf{z}}_t  & \textup{with probability } \rho_t \\
				\mathbf{z}_{t-1} & \textup{with probability } 1-\rho_t
				\end{array}
				\right.$
				\STATE  $ \hat{\mathbf{x}}^\textup{miss}_{t} \sim \Phi(\mathbf{x}^\textup{miss}|\mathbf{x}^\textup{obs}, f_{\boldsymbol{\theta}}(\mathbf{z}_{t}))$
				\ENDFOR
			\end{algorithmic}
		\end{algorithm}
	\end{minipage}
	\vspace{-4mm}
\end{wrapfigure}
The theoretical superiority of the Metropolis-within-Gibbs scheme compared to the pseudo-Gibbs sampler comes \emph{with almost no additional computational cost}. Indeed, all the quantities that need to be computed in order to compute the acceptance probability need also to be computed within the pseudo-Gibbs scheme -- except for prior evaluations, which are assumed to be computationally negligible. However, a poor initialisation of the missing values might lead to a lot of rejections at the beginning of the chain, and to slow convergence. A good initialisation heuristic is to perform a few pseudo-Gibbs iterations at first in order to begin with a sensible imputation. Note eventually that, similarly to the pseudo-Gibbs sampler, our Metropolis-within-Gibbs scheme can be extended to many other variational approximations -- like normalising flows \citep{rezende2015,kingma2016} -- in a straightforward manner.

\section{Empirical results}

In this section, we investigate the empirical realisations of our theoretical findings on DLVMs. 

\paragraph{Implementation.} Some computational choices are common for all experiments: the prior distribution is a standard Gaussian distribution; the chosen variational family $(\Psi(\cdot | \boldsymbol{\kappa}))_{\boldsymbol{\kappa} \in K}$ is the family of Gaussian distributions with ``diagonal + rank-1'' covariance (as in \citealp[Section 4.3]{rezende2014}); stochastic gradients of the ELBO are computed via the \emph{path derivative estimate} of \citet{roeder2017}; the Adam optimiser \citep{kingma2014adam}  is used with a learning rate of $10^{-4}$ and mini-batches of size 10; neural network are initialised following the heuristics of \citet{glorot2010}; sampling for variational autoencoders is performed via the Distributions module of TensorFlow \citep{dillon2017}.
For the Frey faces data set, we used a Gaussian DLVM together with the parametrisation presented in Section 2.1 (with $d=5$ and $h=200$). The architecture of the inference network follows the same parametrisation. Constrained finite Gaussian mixtures were fit using the scikit-learn package \citep{pedregosa2011}. Regarding the data sets considered in the missing data experiments, we used the architecture of \citet{rezende2014}, with $200$ hidden units and an intrinsic dimension of $50$. 

	\begin{wrapfigure}{R}{0.57\textwidth}
			\vspace{-15mm}
	\begin{center}
		\includegraphics[width=0.57\columnwidth]{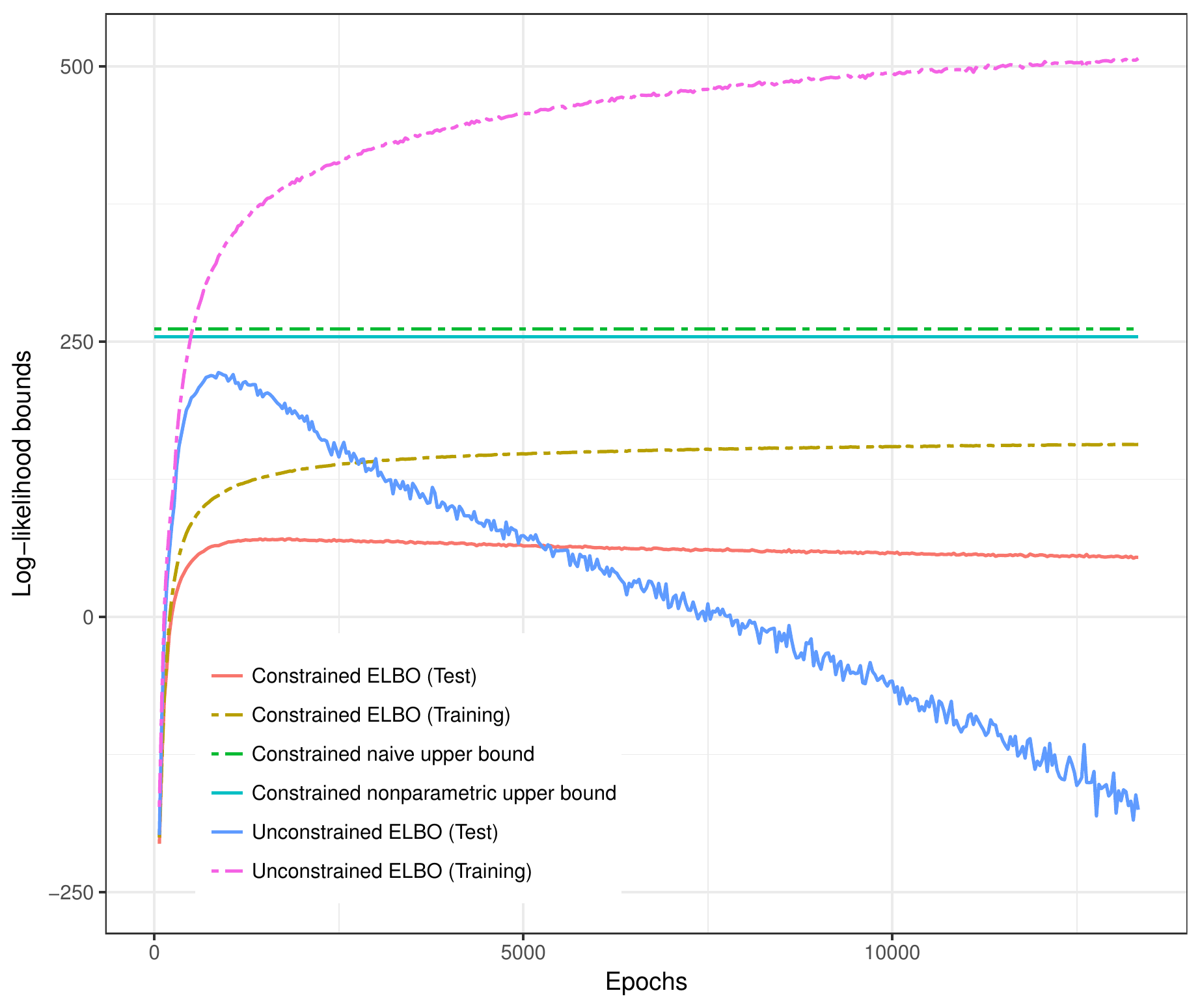}
			\vspace{-6mm}
		\caption{Likelihood blow-up for the Frey Faces data. The unconstrained ELBO appears to diverge, while finding poorer and poorer models.}
		\label{bounds}
	\end{center}
			\vspace{-5mm}
	\end{wrapfigure}


\subsection{Witnessing likelihood blow-up}

To investigate if the unboundedness of the likelihood of a DLVM with Gaussian outputs has actual concrete consequences for variational inference, we train two DLVMs on the Frey faces data set: one with no constraints, and one with the constraint of Proposition \ref{prop:constraints} (with $\xi = 2^{-4}$). The results are presented in Fig. \ref{bounds}. One can notice that the unconstrained DLVM finds models with very high likelihood but very poor generalisation performance. This confirms that the unboundedness of the likelihood is not a merely theoretical concern. We also display the two upper bounds of the likelihood. The nonparametric bound offers a slight but significant improvement over the naive upper bound. On this example, using the nonparametric upper bound as an early stopping criterion for the unconstrained ELBO appears to provide a good regularisation scheme --  that perform better than the covariance constraints on this data set. This illustrates the potential practical usefulness of the connection that we drew between DLVMs and nonparametric mixtures.

\subsection{Comparing the pseudo-Gibbs and Metropolis-within-Gibbs samplers}

We compare the two samplers for single imputation of the test sets of three data sets: Caltech 101 Silhouettes and statically binarised versions of MNIST and OMNIGLOT. We consider two missing data scenarios: a first one with pixels missing uniformly at random (the fractions of missing data considered are $40\%,50\%,60\%,70\%$, and $80\%$) and one where the top or bottom half of the pixels was removed. Both samplers use the same trained VAE and perform the same number of iterations ($2.000$ for the first scenario, and $10.000$ for the second). Note that the convergence can be monitored using a validation set of complete data. The first $20$ iterations of the Metropolis-within-Gibbs sampler are actually pseudo-Gibbs iterations, in order to avoid having too many early rejections. The chains converge much faster for the missing at random (MAR) situation than for the top/bottom missing scenario. This is probably due to the fact that the conditional distribution of the missing half of an image is highly multimodal. The results are displayed on Fig. \ref{fig:imput} and in the supplementary material. The Metropolis-within-Gibbs sampler consistently outperforms the pseudo-Gibbs scheme, especially for the most challenging scenarios where the top/bottom of the image is missing. Moreover, according to Wilcoxon signed-rank tests (see e.g. \citealp[Chapter 5]{dalgaard2008}), the Metropolis-within-Gibbs scheme is significantly more accurate than the pseudo-Gibbs sampler in all scenarios (all $p$-values are provided as supplementary material).
One can see that the pseudo-Gibbs sampler appears to converge quickly to a stationary distribution that gives suboptimal results. The Metropolis-within-Gibbs algorithm converges slower, but to a much more useful distribution.

\begin{figure}
	\begin{center}
		\includegraphics[width=\columnwidth]{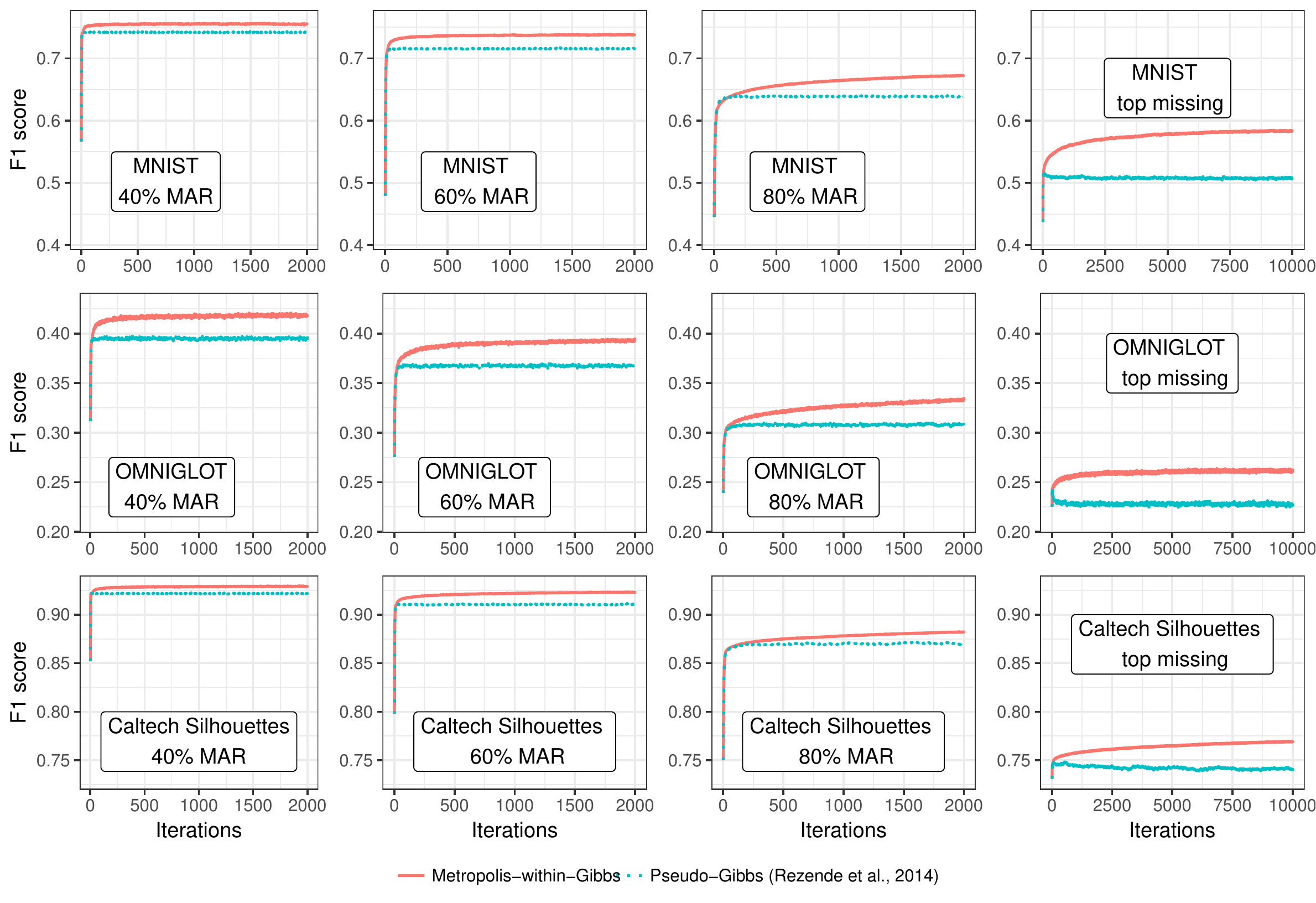}
			\vspace{-6mm}
		\caption{Single imputation results (F1 score between the true and imputed values) for the two Markov chains. Additional results for the bottom missing and the $50\%$ and $70\%$ MAR cases are provided as supplementary material. The more the conditional distribution is challenging (high-dimensional in the MAR cases and highly multimodal in the top/bottom cases), the more the performance gain of our Metropolis-within-Gibbs scheme is important.}
		\label{fig:imput}
	\end{center}
	\vspace{-7mm}
\end{figure}




\section{Conclusion}

Although extremely difficult to compute in practice, the exact likelihood of DLVMs offers several important insights on deep generative modelling. An important research direction for future work is the design of principled regularisation schemes for maximum likelihood estimation.

The objective evaluation of deep generative models remains a widely open question. Missing data imputation is often used as a performance metric for DLVMs (e.g. \citealp{li2016,du2018}). Since both algorithms have essentially the same computational cost, this motivates to replace pseudo-Gibbs sampling by Metropolis-within-Gibbs when evaluating these models. Upon convergence, the samples generated by Metropolis-within-Gibbs do not depend on the inference network, and explicitly depend on the prior, which allows us to evaluate mainly the generative performance of the models. 

We interpreted DLVMs as parsimonious submodels of nonparametric mixture models. While we used this connection to provide upper bounds of the likelihood, many other applications could be derived. In particular, the important body of work regarding consistency of maximum likelihood estimates for nonparametric mixtures (e.g. \citealp{kiefer1956,van2003,chen2017}) could be leveraged to study the asymptotics of DLVMs.



\setcounter{theorem}{0}

\section*{Appendix A. Proof of Theorem 1}

\begin{theorem} For all $i \in \{1,...,n\}$ and $\mathbf{w} \in \mathbb{R}^d \setminus \{0\}$, we have
	$\lim_{k \rightarrow + \infty} \ell \left(\boldsymbol{\theta}^{(i,\mathbf{w})}_{k}\right) = + \infty. $
\end{theorem}

\begin{proof}
	Let $i \in \{1,...,n\}$ and $\mathbf{w} \in \mathbb{R}^d \setminus \{0\}$. To avoid notational overload, we denote $\boldsymbol{\theta}_k = \boldsymbol{\theta}^{(i,\mathbf{w})}_{k}$ in the remainder of this proof. We will show that the contribution $\log p_{\boldsymbol{\theta}_k}(\mathbf{x}_i)$ of the $i$-th observation explodes while all other contributions remain bounded below.
	
	A first useful remark is the fact that, since $\mathbf{z}$ is continuous and has zero mean and $\mathbf{w} \in \mathbb{R}^d \setminus \{0\}$, the univariate random variable $\mathbf{w}^T\mathbf{z}$ is continuous and has zero mean. Therefore $\mathbb{P}(\mathbf{w}^T\mathbf{z}\leq 0)>0$ and $\mathbb{P}(\mathbf{w}^T\mathbf{z} \geq 0)>0$.
	
	Regarding the $i$-th observation, we have for all $k \in \mathbb{N}^*$,
	\begin{align} \label{eq:bound}
	p_{\boldsymbol{\theta}_k}(\mathbf{x}_i) &= \int_{\mathbb{R}^d} \mathcal{N}(\mathbf{x}_i|\mathbf{x}_i,{\boldsymbol{\Sigma}_{\boldsymbol{\theta}_k}}(\mathbf{z})) p(\mathbf{z})d\mathbf{z} \\ &\geq \int_{\mathbf{w}^T\mathbf{z} \leq 0} \mathcal{N}(\mathbf{x}_i|\mathbf{x}_i,{\boldsymbol{\Sigma}_{\boldsymbol{\theta}_k}}(\mathbf{z})) p(\mathbf{z})d\mathbf{z} \\
	&\geq \int_{\mathbf{w}^T\mathbf{z} \leq 0}  |2 \pi {\boldsymbol{\Sigma}_{\boldsymbol{\theta}_k}}( \mathbf{z})  |^{-1/2} p(\mathbf{z})d\mathbf{z}.
	\end{align}
	Let $\mathbf{z} \in \mathbb{R}^d$ such that $\mathbf{w}^T\mathbf{z} \leq 0$. The function $$\varphi: \alpha \mapsto \alpha \tanh \left(\alpha \mathbf{w}^T\mathbf{z}\right) - \alpha,$$ is strictly decreasing on $\mathbb{R}^+$. Indeed, its derivative is equal to
	\begin{equation} \varphi ' (\alpha)= \tanh(\alpha \mathbf{w}^T\mathbf{z}) -1  + (1-\tanh ^2 (\alpha \mathbf{w}^T\mathbf{z})) \alpha \mathbf{w}^T, \end{equation}
	which is strictly negative because the image of the hyperbolic tangent function is $]-1,1[$. Moreover, $$\lim_{\alpha \rightarrow \infty} \alpha \tanh \left(\alpha \mathbf{w}^T\mathbf{z}\right) - \alpha  = - \infty.$$ Therefore, the sequence $ (|2 \pi {\boldsymbol{\Sigma}_{\boldsymbol{\theta}_k}}( \mathbf{z}) |^{-1/2} )_{k\geq 1}$ is strictly increasing and diverges to $+ \infty$ for all $\mathbf{z} \in \mathbb{R}^d$ such that $\mathbf{w}^T\mathbf{z} \leq 0$. Therefore , the monotone convergence theorem combined with the fact that $\mathbb{P}(\mathbf{w}^T\mathbf{z}\leq 0)>0$ insure that the right hand side of \eqref{eq:bound} diverges to $+ \infty$, leading to $p(\mathbf{x}_i|{\boldsymbol{\mu}_{\boldsymbol{\theta}}}_i,{\boldsymbol{\Sigma}_{\boldsymbol{\theta}_k}}) \rightarrow + \infty$.
	
	Regarding the other contributions, let $j \neq i$. Since $\mathbb{P}(\mathbf{w}^T\mathbf{z}>0)>0$, there exists $\varepsilon >0$ such that $\mathbb{P}(\mathbf{w}^T\mathbf{z}\geq \varepsilon)>0$. We have
		\begin{align*}
	p_{\boldsymbol{\theta}_k}(\mathbf{x}_j)  &\geq \int_{\mathbf{w}^T\mathbf{z} \geq \varepsilon } \mathcal{N}(\mathbf{x}_j|\mathbf{x}_i,{\boldsymbol{\Sigma}_{\boldsymbol{\theta}_k}}(\mathbf{z})) p(\mathbf{z})d\mathbf{z} \\ 
	&= \int_{\mathbf{w}^T\mathbf{z} \geq \varepsilon } \frac{ \exp \left(\frac{-||\mathbf{x}_j-\mathbf{x}_i||_2^2}{2 \exp \left( \alpha_k \tanh \left(\alpha_k \mathbf{w}^T\mathbf{z}\right) - \alpha_k \right)}\right) } {\left(2 \pi \exp \left( \alpha_k \tanh \left(\alpha_k \mathbf{w}^T\mathbf{z}\right) - \alpha_k \right) \right)^{p/2} } p(\mathbf{z})d\mathbf{z} \\
	&= \frac{1}{(2\pi)^{p/2}} \int_{\mathbf{w}^T\mathbf{z} \geq \varepsilon }  \exp \left(\frac{-||\mathbf{x}_j-\mathbf{x}_i||_2^2}{2 \exp \left( \alpha_k \tanh \left(\alpha_k \mathbf{w}^T\mathbf{z}\right) - \alpha_k \right)}\right)  p(\mathbf{z})d\mathbf{z} \\
	&\geq \frac{1}{(2\pi)^{p/2}}  \exp \left(\frac{-||\mathbf{x}_j-\mathbf{x}_i||_2^2}{2 \exp \left( \alpha_k \tanh \left(\alpha_k \varepsilon \right) - \alpha_k \right)}\right)  \mathbb{P}(\mathbf{w}^T\mathbf{z} \geq \varepsilon ),
	\end{align*}
	and, since $\lim_{\alpha \rightarrow \infty}  \alpha \tanh \left(\alpha \varepsilon \right) - \alpha = 0$, we will have
	$$\limsup_{k \rightarrow \infty} p_{\boldsymbol{\theta}_k}(\mathbf{x}_j)  \geq  \frac{ \mathbb{P}(\mathbf{w}^T\mathbf{z})}{(2\pi)^{p/2}}  \exp \left(\frac{-||\mathbf{x}_j-\mathbf{x}_i||_2^2}{2}\right)  \geq \varepsilon ,$$
	therefore
	$$ \limsup_{k \rightarrow \infty} p_{\boldsymbol{\theta}_k}(\mathbf{x}_j)  > 0.$$
	
	By combining all contributions, we end up with $\lim_{k \rightarrow + \infty} \ell(\boldsymbol{\theta}_k) = + \infty$.
\end{proof}

\section*{Appendix B. Proof of Theorem 2}

\begin{theorem}
	\label{th:nonpara}
	Assume that $(\Phi (\cdot|\boldsymbol{\eta}))_{\boldsymbol{\eta}\in H}$ is the family of multivariate Bernoulli distributions or the family of Gaussian distributions with the spectral constraint of Proposition \ref{prop:constraints}. The likelihood of the corresponding nonparametric mixture model is maximised for a finite mixture model of $k \leq n$ distributions from the family $(\Phi (\cdot|\boldsymbol{\eta}))_{\boldsymbol{\eta}\in H}$.
\end{theorem}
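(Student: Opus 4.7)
The plan is to reduce the statement to Lindsay's geometric characterisation of the nonparametric MLE, specifically \cite{lindsay1983} Theorem 3.1, by verifying its compactness hypothesis for each of the two output families. The strategy breaks into a setup, an easy Bernoulli step, and a more delicate Gaussian step which will be the main obstacle.

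First, I would set up the geometric framework. Define the \emph{likelihood curve}
\[
\Gamma = \{(\Phi(\mathbf{x}_1|\boldsymbol{\eta}), \ldots, \Phi(\mathbf{x}_n|\boldsymbol{\eta}))^\top : \boldsymbol{\eta} \in H\} \subset \mathbb{R}^n_+,
\]
and observe that the attainable likelihood vectors $(p_G(\mathbf{x}_1), \ldots, p_G(\mathbf{x}_n))^\top$, as $G$ ranges over $\mathcal{P}$, form exactly the convex hull $\mathrm{conv}(\Gamma)$, since each $p_G(\mathbf{x}_i) = \int \Phi(\mathbf{x}_i|\boldsymbol{\eta})dG(\boldsymbol{\eta})$ is a $G$-average of points on $\Gamma$. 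The log-likelihood $\ell(G)$ is the composition of $G \mapsto (p_G(\mathbf{x}_i))_i$ with a strictly concave function on $\mathbb{R}^n_{>0}$. Lindsay's theorem then asserts that whenever $\Gamma$ is compact and $\ell$ is finite for at least one $G$, an MLE $\hat G$ exists and can be chosen supported on at most $n$ atoms. The sharpening from the generic Carath\'{e}odory bound $n+1$ to $n$ comes from the fact that the maximum must lie on the upper boundary of the compact convex body $\mathrm{conv}(\Gamma)$, where boundary points admit shorter Carath\'{e}odory decompositions via a Minkowski-type argument.

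Second, the Bernoulli case is immediate. The natural parameter space is $H = [0,1]^p$, which is compact, and each coordinate map $\boldsymbol{\eta} \mapsto \Phi(\mathbf{x}_i|\boldsymbol{\eta}) = \prod_{j=1}^p \eta_j^{x_{ij}}(1-\eta_j)^{1-x_{ij}}$ is a polynomial in $\boldsymbol{\eta}$, hence continuous. Thus $\Gamma$ is the continuous image of a compact set, hence compact, and Lindsay's theorem applies directly.

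Third, the Gaussian case requires a compactification, and this is the main obstacle. The parameter space $H = \mathbb{R}^p \times \mathcal{S}_p^\xi$ is not compact, but by Proposition \ref{prop:constraints} every coordinate of $\Gamma$ is bounded above by $(2\pi\xi)^{-p/2}$, so $\Gamma$ is at least bounded. For closure, I would adopt a one-point compactification in the spirit of \cite{van1992}: for any diverging sequence $(\boldsymbol{\mu}_k, \boldsymbol{\Sigma}_k)$ in $H$, meaning $\|\boldsymbol{\mu}_k\|\to\infty$ or $\lambda_{\max}(\boldsymbol{\Sigma}_k)\to\infty$, we have $\mathcal{N}(\mathbf{x}_i|\boldsymbol{\mu}_k, \boldsymbol{\Sigma}_k) \to 0$ for each $i \leq n$, because either the quadratic form in the exponent diverges or the determinant factor kills the density. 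On subsets of $H$ where $\boldsymbol{\mu}_k$ stays bounded and the eigenvalues of $\boldsymbol{\Sigma}_k$ remain within $[\xi, M]$, the density is jointly continuous, so any such bounded subsequence has a further subsequence converging inside $\Gamma$. Hence $\overline{\Gamma} = \Gamma \cup \{0\}$ is compact. Because $\ell(G) = -\infty$ whenever some $p_G(\mathbf{x}_i) = 0$, no maximiser can charge the adjoined point $\{0\}$, so Lindsay's conclusion returns an MLE supported on at most $n$ atoms of $H$ itself, completing the proof.
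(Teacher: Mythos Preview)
Your proposal is correct and follows essentially the same route as the paper: both reduce to Lindsay's Theorem 3.1 by checking compactness of the likelihood curve, handle the Bernoulli case directly via compactness of $[0,1]^p$, and treat the Gaussian case by a one-point compactification argument (the paper compactifies $H$ \`a la Alexandroff and extends $\Phi(\cdot|\infty)=0$, you equivalently close $\Gamma$ in $\mathbb{R}^n$ by adjoining the origin), then discard the adjoined point since any mass there strictly lowers the likelihood. The only cosmetic difference is that the paper indexes the curve by the $L\leq n$ \emph{distinct} data points rather than all $n$, which does not affect the argument.
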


\begin{proof}
	Let us assume that there are $L \leq n$ distinct observations $\tilde{\mathbf x}_1,...,\tilde{\mathbf x}_{L}$. Following \citet{lindsay1983}, let us consider the \emph{trace of the likelihood curve}
	$$\Gamma = \{  \left( \Phi (\tilde{\mathbf x}_l | \boldsymbol{\eta} )\right)_{l \leq L} | \boldsymbol{\eta} \in H \}.$$
	We will use the following theorem, which describes maximum likelihood estimators of nonparametric mixtures under some topological conditions.
	
	\begin{theorem*}[\textbf{\citealp[Theorem 3.1]{lindsay1983}}]
		If $\Gamma$ is compact, then the likelihood of the corresponding nonparametric mixture model is maximised for a finite mixture model of $k \leq n$ distributions from the family $(\Phi (\cdot|\boldsymbol{\eta}))_{\boldsymbol{\eta}\in H}$.
	\end{theorem*}
	
	This theorem allows us to treat both cases:
	\paragraph{Bernoulli outputs.} Since $H = [0,1]^p$ is compact and the function $ \boldsymbol{\eta} \mapsto\left(  \Phi (\tilde{\mathbf x}_l  | \boldsymbol{\eta} )\right)_{l \leq L} $ is continuous, $\Gamma$ is compact and Lindsay's theorem can be directly applied.
	\paragraph{Gaussian outputs.} The parameter space $H$ is not compact in this case, but we can get around this problem using a compactification argument similar to the one of \citet{van1992}. Consider the Alexandroff compactification $H\cup \{\infty \}$ of the parameter space \citep[p. 150]{kelley1955}. Because of the definition of $H$, we have $\lim_{\boldsymbol{\eta} \rightarrow \infty}  \Phi (\tilde{\mathbf x}_l  | \boldsymbol{\eta} ) = 0$ for all $l \in \{1,...,L\}$. Therefore, we can continuously extend the function $ \boldsymbol{\eta} \mapsto\left(  \Phi (\tilde{\mathbf x}_l  | \boldsymbol{\eta} ))\right)_{l \leq L} $ from $H$ to $H\cup \{\infty \}$ using the conventions $\Phi(\tilde{\mathbf x}_l  | \infty ) = 0$ for all $l \in \{1,...,L\}$. The space $\{  \left( \Phi (\tilde{\mathbf x}_l | \boldsymbol{\eta} )\right)_{l \leq L} | \boldsymbol{\eta} \in H \cup \{\infty \} \}$ is therefore compact, and we can use Lindsay's theorem to deduce that the nonparametric maximum likelihood estimator for the compactified parameter space $H\cup \{\infty \}$ is a finite mixture of distributions from $(\Phi (\cdot|\boldsymbol{\eta}))_{\boldsymbol{\eta}\in H\cup \{\infty \}}$. However, this finite mixture can only contain elements from $(\Phi (\cdot|\boldsymbol{\eta}))_{\boldsymbol{\eta}\in H}$. Indeed, if any mixture component were associated with $\boldsymbol{\eta} = \infty$, the likelihood could be improved by emptying said component. Therefore, the maximum likelihood estimator found using the compactified space is also the maximum likelihood estimator of the original space, which allows us to conclude.
\end{proof}

\section*{Appendix C. Proof of Theorem 3}

\begin{theorem}
	Assume that \begin{enumerate}
		\item $(\Phi (\cdot|\boldsymbol{\eta}))_{\boldsymbol{\eta}\in H}$ is the family of multivariate Bernoulli distributions or the family of Gaussian distributions with the spectral constraint of Proposition \ref{prop:constraints}.
		\item The decoder has universal approximation abilities : for any compact $K \in  \mathbb{R}^d$ and continuous function $f: K \rightarrow H$, for all $\varepsilon >0$, there exists $\boldsymbol{\theta}$ such that $||f - f_{\boldsymbol{\theta}} ||_\infty < \varepsilon$.
	\end{enumerate}
	Then, for all $\varepsilon >0$, there exists $\boldsymbol{\theta}$ such that
	$$    \ell(\hat{G}) \geq \ell(\boldsymbol{\theta}) \geq \ell(\hat{G}) - \varepsilon  .$$ 
	
\end{theorem}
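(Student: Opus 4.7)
The plan is as follows. By Theorem~2, I may write the nonparametric MLE as a finite mixture $\hat{G} = \sum_{j=1}^k \pi_j \delta_{\boldsymbol{\eta}_j}$ with $k \leq n$ components, positive weights $\pi_1,\ldots,\pi_k$ summing to $1$, and parameters $\boldsymbol{\eta}_1,\ldots,\boldsymbol{\eta}_k \in H$. The inequality $\ell(\boldsymbol{\theta}) \leq \ell(\hat{G})$ is automatic because any DLVM is a particular nonparametric mixture, so the entire task is to construct, for each $\varepsilon > 0$, some $\boldsymbol{\theta}$ whose induced mixture is close enough to $\hat{G}$ that $\ell(\boldsymbol{\theta}) \geq \ell(\hat{G}) - \varepsilon$.

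First I would carve up the latent space. Pick a large compact ball $C \subset \mathbb{R}^d$ with $\mathbb{P}(\mathbf{z} \notin C) \leq \delta$ for a $\delta > 0$ to be fixed later, then partition $C$ into $k$ disjoint closed ``core'' regions $R_1,\ldots,R_k$ and a thin ``transition'' region $T$ with $\mathbb{P}(\mathbf{z}\in R_j) = \pi_j(1-2\delta)$ and $\mathbb{P}(\mathbf{z}\in T)\leq \delta$. A concrete realisation is to slice $C$ by hyperplanes $\{\mathbf{u}^T\mathbf{z}=a_j\}$ for a fixed direction $\mathbf{u}$: the law of $\mathbf{u}^T\mathbf{z}$ is continuous (as used in the proof of Theorem~1), so the $a_j$ can be tuned to match the weights and arbitrarily thin strips around each $a_j$ can serve as transition layers. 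I would then define a continuous target $f:\mathbb{R}^d\to H$ equal to $\boldsymbol{\eta}_j$ on $R_j$, continuously interpolated across $T$ (possible because $H$ is convex in both the Bernoulli and spectrally-constrained Gaussian cases) and extended outside $C$ by Tietze. The universal approximation assumption then delivers $\boldsymbol{\theta}\in\boldsymbol{\Theta}$ with $\|f-f_{\boldsymbol{\theta}}\|_\infty \leq \eta$ on $C$ for any desired $\eta > 0$.

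Next I would compare $p_{\boldsymbol{\theta}}(\mathbf{x}_i)$ with $p_{\hat{G}}(\mathbf{x}_i) = \sum_j \pi_j \Phi(\mathbf{x}_i|\boldsymbol{\eta}_j)$ by splitting the integral:
\begin{equation*}
p_{\boldsymbol{\theta}}(\mathbf{x}_i) = \sum_{j=1}^k \int_{R_j} \Phi(\mathbf{x}_i | f_{\boldsymbol{\theta}}(\mathbf{z})) p(\mathbf{z}) d\mathbf{z} + \int_T \Phi(\mathbf{x}_i | f_{\boldsymbol{\theta}}(\mathbf{z})) p(\mathbf{z}) d\mathbf{z} + \int_{\mathbb{R}^d \setminus C} \Phi(\mathbf{x}_i | f_{\boldsymbol{\theta}}(\mathbf{z})) p(\mathbf{z}) d\mathbf{z}.
\end{equation*}
Under either output assumption, $\Phi(\mathbf{x}_i|\cdot)$ is bounded by a constant $M$ (equal to $1$ for Bernoulli or $(2\pi\xi)^{-p/2}$ for constrained Gaussians by Proposition~\ref{prop:constraints}), so the last two integrals together contribute at most $2M\delta$. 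On each $R_j$, continuity of $\Phi(\mathbf{x}_i|\cdot)$ at $\boldsymbol{\eta}_j$ combined with $\|f_{\boldsymbol{\theta}}(\mathbf{z})-\boldsymbol{\eta}_j\|\leq\eta$ gives $|\Phi(\mathbf{x}_i|f_{\boldsymbol{\theta}}(\mathbf{z}))-\Phi(\mathbf{x}_i|\boldsymbol{\eta}_j)|\leq \omega_i(\eta)$ for a modulus of continuity $\omega_i$ that vanishes with $\eta$. Collecting terms, $|p_{\boldsymbol{\theta}}(\mathbf{x}_i) - p_{\hat{G}}(\mathbf{x}_i)| \leq \omega_i(\eta) + 2M\delta$. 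Since $\hat{G}$ maximises the nonparametric likelihood, $p_{\hat{G}}(\mathbf{x}_i) > 0$ for every $i$, so $\log$ is Lipschitz on a neighbourhood of that value; choosing $\delta$ and $\eta$ small enough yields $\log p_{\boldsymbol{\theta}}(\mathbf{x}_i) \geq \log p_{\hat{G}}(\mathbf{x}_i) - \varepsilon/n$ for every $i$, and summing over $i$ closes the argument.

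The principal difficulty I anticipate is geometric rather than conceptual: producing a partition of $C$ into $k$ pieces of prescribed prior mass, separated by transition layers of arbitrarily small mass, while admitting a continuous interpolation between the constant values $\boldsymbol{\eta}_j$. The hyperplane-slab construction above handles this cleanly and exploits precisely the same continuity-of-$\mathbf{u}^T\mathbf{z}$ observation that drove the proof of Theorem~1. A secondary subtlety is that for constrained Gaussian outputs the parameter set $H$ is not a linear space, but convexity of both the mean component and of the spectrally-constrained covariance cone $\mathcal{S}_p^\xi$ makes componentwise linear interpolation sufficient to keep the target inside $H$.
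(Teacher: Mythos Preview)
Your proposal is correct and follows essentially the same strategy as the paper's proof: represent $\hat{G}$ as a finite mixture, carve a large compact subset of the latent space into $k$ core regions of prescribed prior mass separated by thin transition layers, build a continuous target decoder that is constant on each core, invoke universal approximation on the compact, and conclude by continuity of $\boldsymbol{\eta}\mapsto\Phi(\mathbf{x}_i|\boldsymbol{\eta})$ and of the logarithm. The only differences are cosmetic. The paper slices radially, using the cumulative distribution function of $\|\mathbf{z}\|_2$ to produce concentric annuli of mass $\pi_j - e$, whereas you slice linearly via the law of $\mathbf{u}^T\mathbf{z}$; both constructions work equally well under the standing continuity assumption on the prior. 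For the continuous interpolation between core values, the paper appeals to the Tietze extension theorem, while your use of convexity of $H$ (true in both the Bernoulli and the $\mathcal{S}_p^\xi$-constrained Gaussian case) is a slightly more elementary alternative that avoids the extension machinery. One small bookkeeping remark: your stated bound $|p_{\boldsymbol{\theta}}(\mathbf{x}_i)-p_{\hat{G}}(\mathbf{x}_i)|\leq \omega_i(\eta)+2M\delta$ drops the $2\delta\,p_{\hat{G}}(\mathbf{x}_i)$ term coming from $\mathbb{P}(R_j)=\pi_j(1-2\delta)$ rather than $\pi_j$; this is harmless since only the lower bound on $p_{\boldsymbol{\theta}}(\mathbf{x}_i)$ is needed, and that direction is unaffected (the integrals over $T$ and $\mathbb{R}^d\setminus C$ are nonnegative and can simply be discarded, exactly as the paper does).
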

\begin{proof}
	The left hand side of the inequality directly comes from the first assumption and Theorem \ref{th:nonpara}. We will now prove the right hand side.
	
	Let $\varepsilon > 0$. Since the logarithm is continuous, there exists $\delta > 0$ such that
	\begin{equation}
	\label{eq:delta}
	0\leq u \leq \delta \implies \sum_{i=1}^n \log(p_{\hat{G}}(\mathbf{x}_i) - u ) \geq \ell(\hat{G}) - \varepsilon.
	\end{equation}
	We will show that $p_{\hat{G}}(\mathbf{x}_i)$ and $p_{\boldsymbol{\theta}}(\mathbf{x}_i)$ can be made closer than $\delta$ for all $i\leq n$.
	
	Theorem \ref{th:nonpara} insures that $\hat{G}$ is a finite mixture, so there exist some $(\pi_1,...,\pi_K) \in \Delta_K$ and $\boldsymbol{\eta}_1,...,\boldsymbol{\eta}_K \in H$ such that
	$$\forall \mathbf{x} \in \mathcal{X}, \; p_{\hat{G}}(\mathbf{x}) = \sum_{k=1}^K \pi_k \Phi(\mathbf{x}|\boldsymbol{\eta}_k).$$
	We will partition the domain of integration of the latent variable into several parts:
	\begin{itemize}
		\item an infinite part of very small prior mass,
		\item a compact set of high prior mass over which we can apply the universal approximation property.
	\end{itemize}
	The compact set will be divided itself into $2K$ parts: one part for each mixture component and $K-1$ small parts to insure the continuity of the decoder. More specifically, let $e\in ]0,\min \{\pi_1,...,\pi_K\}[$, let $F$ be the (prior) cumulative distribution function of $||\mathbf{z}||_2$, and let:
	\begin{itemize}
		\item $ \alpha_1 = 0,$
		\item $\forall k \in \{2,...,K\} , \; \alpha_k = F^{-1}(\sum_{l=1}^{k-1}\pi_l),$
		\item $\alpha_{k+1}= + \infty,$
		\item $\forall k \in \{1,...,K\} , \; \beta_k = F^{-1}(\sum_{l=1}^{k}\pi_l - e).$
	\end{itemize}
	Let $C = \{\mathbf{z}\in \mathbb{R}^d\; | \;||\mathbf{z}||_2 \leq \beta_K  \}$. Consider a continuous function $f: C \rightarrow H$ such that 
	$$\forall \mathbf{z} \in K,  f(\mathbf{z}) = \begin{cases} 
	\boldsymbol{\eta}_1 & \textup{if }\alpha_1 \leq ||\mathbf{z}||_2 \leq \beta_1  \\
	\boldsymbol{\eta}_2& \textup{if } \alpha_2 \leq ||\mathbf{z}||_2 \leq \beta_2 \\
	... \\
	\boldsymbol{\eta}_K & \textup{if } \alpha_K \leq ||\mathbf{z}||_2 \leq \beta_K.
	\end{cases}
	$$
	For example, $f$ can be built using the Tietze extension theorem (see e.g. \citealp[p. 242]{kelley1955}) together with the conditions of the above formula. According to the universal approximation property, we can find decoders arbitrarily close to $f$. How close do they need to be? Invoking the continuity of the functions $(\boldsymbol{\eta} \rightarrow \Phi (\mathbf{x}_i|\boldsymbol{\eta}))_{i \leq n}$ in $\boldsymbol{\eta}_1,...,\boldsymbol{\eta}_k$, there exists $\Delta >0$ such that
	$$||\boldsymbol{\eta}_k - \boldsymbol{\eta} ||_{\infty}\leq \Delta \implies |\Phi (\mathbf{x}_i|\boldsymbol{\eta})-\Phi (\mathbf{x}_i|\boldsymbol{\eta}_k)| \leq e.$$
	We will consider now a decoder $f_{\boldsymbol{\theta}}$ such that $|| f - f_{\boldsymbol{\theta}}||_{\infty} \leq \Delta$.

	We can write, for all $i\leq n$,
	\begin{align*}
	p_{\boldsymbol{\theta}}(\mathbf{x}_i) &= \sum_{l=1}^K \left( \int_{\alpha_l \leq ||\mathbf{z}||_2 \leq \beta_l} \Phi (\mathbf{x}_i|f_{\boldsymbol{\theta}}(\mathbf{z}))p(\mathbf{z})d\mathbf{z} + \int_{\beta_l \leq ||\mathbf{z}||_2 \leq \alpha_{l+1}} \Phi (\mathbf{x}_i|f_{\boldsymbol{\theta}}(\mathbf{z}))p(\mathbf{z})d\mathbf{z} \right)  \\
	&\geq \sum_{l=1}^K  \int_{\alpha_l \leq ||\mathbf{z}||_2 \leq \beta_l} \Phi (\mathbf{x}_i|f_{\boldsymbol{\theta}}(\mathbf{z}))p(\mathbf{z})d\mathbf{z}.
	\end{align*}
	Using the fact that $|| f - f_{\boldsymbol{\theta}}||_{\infty} \leq \Delta$, we have therefore, for all $i \leq n$,
	
	\begin{align*}
	p_{\boldsymbol{\theta}}(\mathbf{x}_i) &\geq \sum_{l=1}^K  \int_{\alpha_l \leq ||\mathbf{z}||_2 \leq \beta_l} (\Phi (\mathbf{x}_i|\boldsymbol{\eta}_k)-e)p(\mathbf{z})d\mathbf{z} \\
	&=  \sum_{l=1}^K (\Phi (\mathbf{x}_i|\boldsymbol{\eta}_k)-e)(\pi_k - e) \\
	&=  \sum_{l=1}^K \left(\pi_k \Phi (\mathbf{x}_i|\boldsymbol{\eta}_k) + e^2 -e(\pi_k + \Phi (\mathbf{x}_i|\boldsymbol{\eta}_k)) \right)\\
	&\geq \sum_{l=1}^K\pi_k \Phi (\mathbf{x}_i|\boldsymbol{\eta}_k) -e \left(\sum_{l=1}^K \Phi (\mathbf{x}_i|\boldsymbol{\eta}_k) +1\right).
	\end{align*}
	Now, if we take $e$ such that $$e \left(\sum_{l=1}^K \Phi (\mathbf{x}_i|\boldsymbol{\eta}_k) +1\right)\leq \delta,$$ for all $i \leq n$, we end up with
	$$p_{\boldsymbol{\theta}}(\mathbf{x}_i) \geq p_{\hat{G}}(\mathbf{x}_i) - \delta.$$
	Using \eqref{eq:delta}, this eventually leads to $\ell(\boldsymbol{\theta}) \geq \ell(\hat{G}) - \varepsilon$.
\end{proof}

\section*{Appendix G. Additional imputation experiments}

\begin{figure}[h]
	\begin{center}
		\includegraphics[width=\columnwidth]{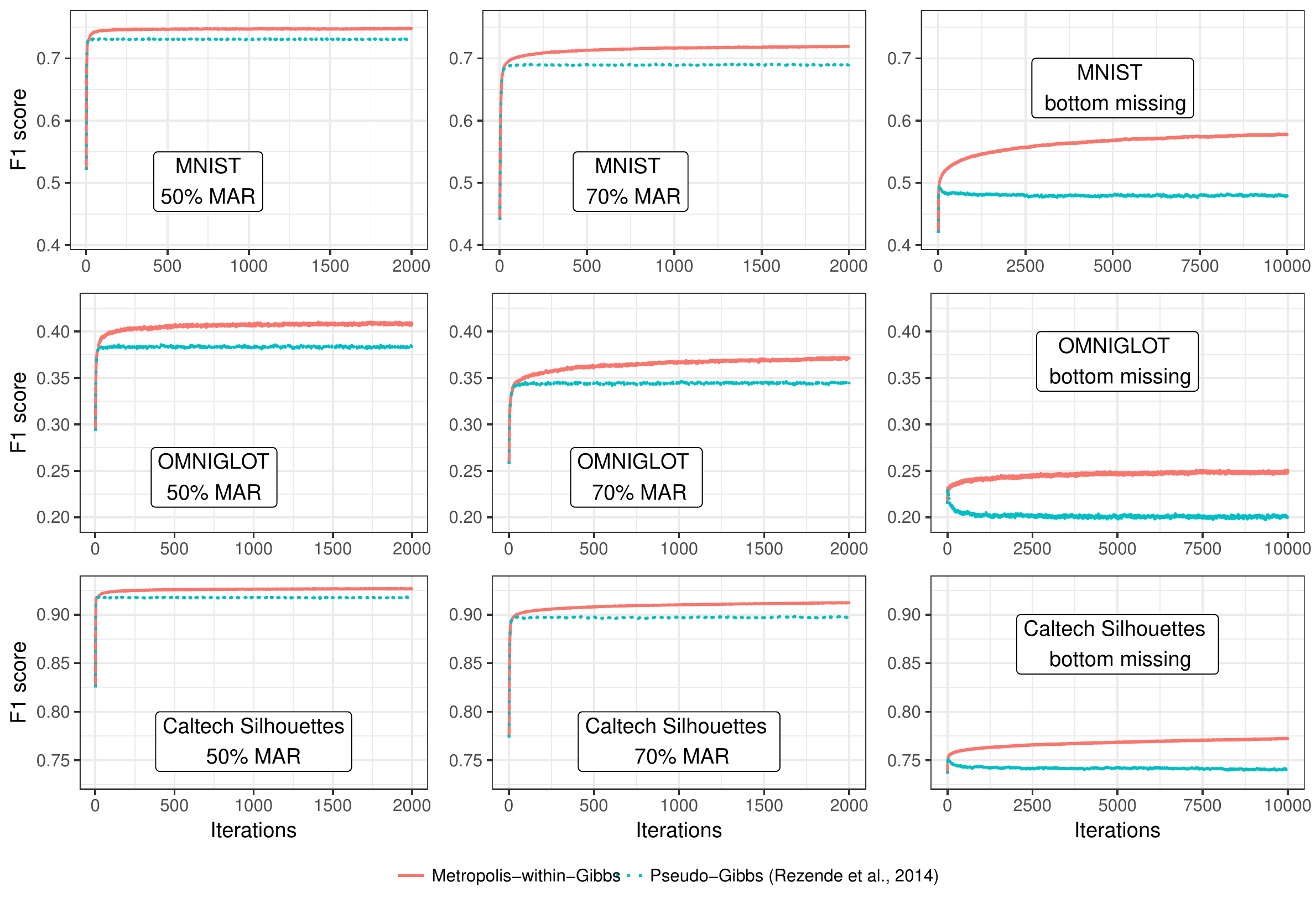}
		\caption{Single imputation results (F1 score between the true and imputed values) for the two Markov chains.}
		\label{fig:imput}
	\end{center}
\end{figure}

We provide below all $p$-values obtained for Wilcoxon signed-rank tests (see e.g. \citealp[Chapter 5]{dalgaard2008}) based on the accuracy of the two imputation schemes over all images in the test sets. The Metropolis-within-Gibbs sampler always significantly outperforms the pseudo-Gibbs scheme.

\begin{table}[h]
	\centering
	\small
	\label{my-label}
	\begin{tabular}{lccccccc}
		& \multicolumn{5}{c}{\textit{Missing at random}}                      & \textit{Top}         & \textit{Bottom}      \\
		& $40\%$      & $50\%$      & $60\%$      & $70\%$      & $80\%$      & \multicolumn{1}{l}{} & \multicolumn{1}{l}{} \\ \hline
		MNIST               & $<10^{-15}$ & $<10^{-15}$ & $<10^{-15}$ & $<10^{-15}$ & $<10^{-15}$ & $<10^{-15}$          & $<10^{-15}$          \\
		OMNIGLOT            & $<10^{-15}$ & $<10^{-15}$ & $<10^{-15}$ & $<10^{-15}$ & $<10^{-15}$ & $0.01$               & $<10^{-15}$          \\
		Caltech Silhouettes & $<10^{-15}$ & $<10^{-15}$ & $<10^{-15}$ & $<10^{-15}$ & $<10^{-15}$ & $<10^{-15}$          & $<10^{-15}$         
	\end{tabular}
	\caption{$p$-values obtained for Wilcoxon signed-rank tests on the accuracy of the two imputation schemes.}
\end{table}

\bibliography{biblio}

\begin{thebibliography}{54}
\providecommand{\natexlab}[1]{#1}
\providecommand{\url}[1]{\texttt{#1}}
\expandafter\ifx\csname urlstyle\endcsname\relax
  \providecommand{\doi}[1]{doi: #1}\else
  \providecommand{\doi}{doi: \begingroup \urlstyle{rm}\Url}\fi

\bibitem[Arora et~al.(2018)Arora, Risteski, and Zhang]{arora2018}
S.~Arora, A.~Risteski, and Y.~Zhang.
\newblock Do {GAN}s learn the distribution? {S}ome theory and empirics.
\newblock In \emph{International Conference on Learning Representations}, 2018.

\bibitem[Bartholomew et~al.(2011)Bartholomew, Knott, and
  Moustaki]{bartholomew2011}
D.~J. Bartholomew, M.~Knott, and I.~Moustaki.
\newblock \emph{Latent variable models and factor analysis: {A} unified
  approach}, volume 904.
\newblock John Wiley \& Sons, 2011.

\bibitem[Biernacki and Castellan(2011)]{biernacki2011}
C.~Biernacki and G.~Castellan.
\newblock A data-driven bound on variances for avoiding degeneracy in
  univariate {Gaussian} mixtures.
\newblock \emph{Pub. IRMA Lille}, 71, 2011.

\bibitem[Blei et~al.(2017)Blei, Kucukelbir, and McAuliffe]{blei2017}
D.~M. Blei, A.~Kucukelbir, and J.~D. McAuliffe.
\newblock Variational inference: A review for statisticians.
\newblock \emph{Journal of the American Statistical Association}, 112\penalty0
  (518):\penalty0 859--877, 2017.

\bibitem[Bowman et~al.(2016)Bowman, Vilnis, Vinyals, Dai, Jozefowicz, and
  Bengio]{bowman2016}
S.~R. Bowman, L.~Vilnis, O.~Vinyals, A.~M. Dai, R.~Jozefowicz, and S.~Bengio.
\newblock Generating sentences from a continuous space.
\newblock \emph{Proceedings of CoNLL}, 2016.

\bibitem[Burda et~al.(2016)Burda, Grosse, and Salakhutdinov]{burda2016}
Y.~Burda, R.~Grosse, and R.~Salakhutdinov.
\newblock Importance weighted autoencoders.
\newblock \emph{International Conference on Learning Representations}, 2016.

\bibitem[Chen(2017)]{chen2017}
J.~Chen.
\newblock Consistency of the {MLE} under mixture models.
\newblock \emph{Statistical Science}, 32\penalty0 (1):\penalty0 47--63, 2017.

\bibitem[Cremer et~al.(2017)Cremer, Morris, and Duvenaud]{cremer2017}
C.~Cremer, Q.~Morris, and D.~Duvenaud.
\newblock Reinterpreting importance-weighted autoencoders.
\newblock \emph{International Conference on Learning Representations (Workshop
  track)}, 2017.

\bibitem[Cremer et~al.(2018)Cremer, Li, and Duvenaud]{cremer2018}
C.~Cremer, X.~Li, and D.~Duvenaud.
\newblock Inference suboptimality in variational autoencoders.
\newblock In \emph{Proceedings of the 35th International Conference on Machine
  Learning}, 2018.

\bibitem[Dalgaard(2008)]{dalgaard2008}
P.~Dalgaard.
\newblock \emph{Introductory statistics with {R}}.
\newblock Springer Science \& Business Media, 2008.

\bibitem[Dempster et~al.(1977)Dempster, Laird, and Rubin]{dempster1977}
A.~P. Dempster, N.~M. Laird, and D.~B. Rubin.
\newblock Maximum likelihood from incomplete data via the {EM} algorithm.
\newblock \emph{Journal of the Royal Statistical Society: Series B (Statistical
  Methodology)}, pages 1--38, 1977.

\bibitem[Dharmadhikari and Joag-Dev(1988)]{dharmadhikari1988}
S.~Dharmadhikari and K.~Joag-Dev.
\newblock \emph{Unimodality, convexity, and applications}.
\newblock Elsevier, 1988.

\bibitem[Dillon et~al.(2017)Dillon, Langmore, Tran, Brevdo, Vasudevan, Moore,
  Patton, Alemi, Hoffman, and Saurous]{dillon2017}
J.~V. Dillon, I.~Langmore, D.~Tran, E.~Brevdo, S.~Vasudevan, D.~Moore,
  B.~Patton, A.~Alemi, M.~Hoffman, and R.~A. Saurous.
\newblock {TensorFlow} distributions.
\newblock \emph{arXiv preprint arXiv:1711.10604}, 2017.

\bibitem[Dinh et~al.(2017)Dinh, Sohl-Dickstein, and Bengio]{dinh2017}
L.~Dinh, J.~Sohl-Dickstein, and S.~Bengio.
\newblock Density estimation using real {NVP}.
\newblock \emph{International Conference on Learning Representations}, 2017.

\bibitem[Du et~al.(2018)Du, Zhu, and Zhang]{du2018}
C.~Du, J.~Zhu, and B.~Zhang.
\newblock Learning deep generative models with doubly stochastic gradient
  {MCMC}.
\newblock \emph{IEEE Transactions on Neural Networks and Learning Systems},
  PP\penalty0 (99):\penalty0 1--13, 2018.

\bibitem[Gelman(1993)]{gelman1993}
A.~Gelman.
\newblock Iterative and non-iterative simulation algorithms.
\newblock \emph{Computing science and statistics}, pages 433--433, 1993.

\bibitem[Geman and Geman(1984)]{Geman1984}
S.~Geman and D.~Geman.
\newblock Stochastic relaxation, {G}ibbs distributions, and the {B}ayesian
  restoration of images.
\newblock \emph{{IEEE} Transactions on Pattern Analysis and Machine
  Intelligence}, \penalty0 (6):\penalty0 721--741, 1984.

\bibitem[Glorot and Bengio(2010)]{glorot2010}
X.~Glorot and Y.~Bengio.
\newblock Understanding the difficulty of training deep feedforward neural
  networks.
\newblock In \emph{Proceedings of the Thirteenth International Conference on
  Artificial Intelligence and Statistics}, pages 249--256, 2010.

\bibitem[G{\'o}mez-Bombarelli et~al.(2018)G{\'o}mez-Bombarelli, Wei, Duvenaud,
  Hern{\'a}ndez-Lobato, S{\'a}nchez-Lengeling, Sheberla, Aguilera-Iparraguirre,
  Hirzel, Adams, and Aspuru-Guzik]{gomez2018}
R.~G{\'o}mez-Bombarelli, J.~N. Wei, D.~Duvenaud, J.~M. Hern{\'a}ndez-Lobato,
  B.~S{\'a}nchez-Lengeling, D.~Sheberla, J.~Aguilera-Iparraguirre, T.~D.
  Hirzel, R.~P. Adams, and A.~Aspuru-Guzik.
\newblock Automatic chemical design using a data-driven continuous
  representation of molecules.
\newblock \emph{ACS Central Science}, 2018.

\bibitem[Goodfellow et~al.(2014)Goodfellow, Pouget-Abadie, Mirza, Xu,
  Warde-Farley, Ozair, Courville, and Bengio]{goodfellow2014}
I.~Goodfellow, J.~Pouget-Abadie, M.~Mirza, B.~Xu, D.~Warde-Farley, S.~Ozair,
  A.~Courville, and Y.~Bengio.
\newblock Generative adversarial nets.
\newblock In \emph{Advances in Neural Information Processing Systems}, pages
  2672--2680, 2014.

\bibitem[Goodfellow et~al.(2016)Goodfellow, Bengio, and
  Courville]{goodfellow2016}
I.~Goodfellow, Y.~Bengio, and A.~Courville.
\newblock \emph{Deep learning}.
\newblock MIT press, 2016.

\bibitem[Hastings(1970)]{Hastings1970}
W.~K. Hastings.
\newblock {M}onte {C}arlo sampling methods using {M}arkov chains and their
  applications.
\newblock \emph{Biometrika}, 57\penalty0 (1):\penalty0 97--109, 1970.

\bibitem[Hathaway(1985)]{hathaway1985}
R.~J. Hathaway.
\newblock A constrained formulation of maximum-likelihood estimation for normal
  mixture distributions.
\newblock \emph{The Annals of Statistics}, 13\penalty0 (2):\penalty0 795--800,
  1985.

\bibitem[Heckerman et~al.(2000)Heckerman, Chickering, Meek, Rounthwaite, and
  Kadie]{heckerman2000}
D.~Heckerman, D.~M. Chickering, C.~Meek, R.~Rounthwaite, and C.~Kadie.
\newblock Dependency networks for inference, collaborative filtering, and data
  visualization.
\newblock \emph{Journal of Machine Learning Research}, 1\penalty0
  (Oct):\penalty0 49--75, 2000.

\bibitem[Hotelling(1933)]{hotelling1933}
H.~Hotelling.
\newblock Analysis of a complex of statistical variables into principal
  components.
\newblock \emph{Journal of educational psychology}, 24\penalty0 (6):\penalty0
  417, 1933.

\bibitem[Jolliffe and Cadima(2016)]{jolliffe2016principal}
I.~T. Jolliffe and J.~Cadima.
\newblock Principal component analysis: a review and recent developments.
\newblock \emph{Philosophical Transactions of the Royal Society of London A:
  Mathematical, Physical and Engineering Sciences}, 374\penalty0 (2065), 2016.

\bibitem[Kelley(1955)]{kelley1955}
J.~L. Kelley.
\newblock \emph{General Topology}.
\newblock Van Nostrand, New York, 1955.

\bibitem[Kiefer and Wolfowitz(1956)]{kiefer1956}
J.~Kiefer and J.~Wolfowitz.
\newblock Consistency of the maximum likelihood estimator in the presence of
  infinitely many incidental parameters.
\newblock \emph{The Annals of Mathematical Statistics}, pages 887--906, 1956.

\bibitem[Kingma and Ba(2014)]{kingma2014adam}
D.~P. Kingma and J.~Ba.
\newblock Adam: A method for stochastic optimization.
\newblock \emph{Proceedings of the International Conference on Learning
  Representations}, 2014.

\bibitem[Kingma and Welling(2014)]{kingma2014}
D.~P. Kingma and M.~Welling.
\newblock Auto-encoding variational {Bayes}.
\newblock In \emph{Proceedings of the International Conference on Learning
  Representations}, 2014.

\bibitem[Kingma et~al.(2016)Kingma, Salimans, Jozefowicz, Chen, Sutskever, and
  Welling]{kingma2016}
D.~P. Kingma, T.~Salimans, R.~Jozefowicz, X.~Chen, I.~Sutskever, and
  M.~Welling.
\newblock Improved variational inference with inverse autoregressive flow.
\newblock In \emph{Advances in Neural Information Processing Systems}, pages
  4743--4751, 2016.

\bibitem[Kusner et~al.(2017)Kusner, Paige, and
  Hern{\'a}ndez-Lobato]{kusner2017}
M.~J. Kusner, B.~Paige, and J.~M. Hern{\'a}ndez-Lobato.
\newblock Grammar variational autoencoder.
\newblock In \emph{Proceedings of the 34th International Conference on Machine
  Learning}, pages 1945--1954, 2017.

\bibitem[Le~Cam(1990)]{lecam1990}
L.~Le~Cam.
\newblock Maximum likelihood: an introduction.
\newblock \emph{International Statistical Review/Revue Internationale de
  Statistique}, pages 153--171, 1990.

\bibitem[Leshno et~al.(1993)Leshno, Lin, Pinkus, and Schocken]{leshno1993}
M.~Leshno, V.~Y. Lin, A.~Pinkus, and S.~Schocken.
\newblock Multilayer feedforward networks with a nonpolynomial activation
  function can approximate any function.
\newblock \emph{Neural Networks}, 6\penalty0 (6):\penalty0 861--867, 1993.

\bibitem[Li et~al.(2016)Li, Zhu, and Zhang]{li2016}
C.~Li, J.~Zhu, and B.~Zhang.
\newblock Learning to generate with memory.
\newblock In \emph{Proceedings of The 33rd International Conference on Machine
  Learning}, pages 1177--1186, 2016.

\bibitem[Lindsay(1983)]{lindsay1983}
B.~Lindsay.
\newblock The geometry of mixture likelihoods: a general theory.
\newblock \emph{The Annals of Statistics}, 11\penalty0 (1):\penalty0 86--94,
  1983.

\bibitem[Lindsay(1995)]{lindsay1995}
B.~Lindsay.
\newblock \emph{Mixture Models: Theory, Geometry and Applications}, volume~5 of
  \emph{Regional Conference Series in Probability and Statistics}.
\newblock Institute of Mathematical Statistics and American Statistical
  Association, 1995.

\bibitem[Maal{\o}e et~al.(2016)Maal{\o}e, S{\o}nderby, S{\o}nderby, and
  Winther]{maaloe2016}
L.~Maal{\o}e, C.~K. S{\o}nderby, S.~K. S{\o}nderby, and O.~Winther.
\newblock Auxiliary deep generative models.
\newblock In \emph{International Conference on Machine Learning}, pages
  1445--1453, 2016.

\bibitem[Metropolis et~al.(1953)Metropolis, Rosenbluth, Rosenbluth, Teller, and
  Teller]{Metropolis1953}
N.~Metropolis, A.~W. Rosenbluth, M.~N. Rosenbluth, A.~H. Teller, and E.~Teller.
\newblock Equation of state calculations by fast computing machines.
\newblock \emph{The Journal of Chemical Physics}, 21\penalty0 (6):\penalty0
  1087--1092, 1953.

\bibitem[Pedregosa et~al.(2011)Pedregosa, Varoquaux, Gramfort, Michel, Thirion,
  Grisel, Blondel, Prettenhofer, Weiss, and Dubourg]{pedregosa2011}
F.~Pedregosa, G.~Varoquaux, A.~Gramfort, V.~Michel, B.~Thirion, O.~Grisel,
  M.~Blondel, P.~Prettenhofer, R.~Weiss, and V.~Dubourg.
\newblock Scikit-learn: Machine learning in {Python}.
\newblock \emph{Journal of machine learning research}, 12\penalty0
  (Oct):\penalty0 2825--2830, 2011.

\bibitem[Pu et~al.(2016)Pu, Gan, Henao, Yuan, Li, Stevens, and Carin]{pu2016}
Y.~Pu, Z.~Gan, R.~Henao, X.~Yuan, C.~Li, A.~Stevens, and L.~Carin.
\newblock Variational autoencoder for deep learning of images, labels and
  captions.
\newblock In \emph{Advances in Neural Information Processing Systems}, pages
  2352--2360, 2016.

\bibitem[Ranganath et~al.(2016)Ranganath, Tran, and Blei]{ranganath2016}
R.~Ranganath, D.~Tran, and D.~Blei.
\newblock Hierarchical variational models.
\newblock In \emph{Proceedings of the 33rd International Conference on Machine
  Learning}, pages 324--333, 2016.

\bibitem[Rezende and Mohamed(2015)]{rezende2015}
D.~Rezende and S.~Mohamed.
\newblock Variational inference with normalizing flows.
\newblock In \emph{Proceedings of the 32nd International Conference on Machine
  Learning}, pages 1530--1538, 2015.

\bibitem[Rezende et~al.(2014)Rezende, Mohamed, and Wierstra]{rezende2014}
D.~Rezende, S.~Mohamed, and D.~Wierstra.
\newblock Stochastic backpropagation and approximate inference in deep
  generative models.
\newblock In \emph{Proceedings of the 31st International Conference on Machine
  Learning}, pages 1278--1286, 2014.

\bibitem[Rezende et~al.(2016)Rezende, Eslami, Mohamed, Battaglia, Jaderberg,
  and Heess]{rezende2016}
D.~Rezende, S.~M.~A. Eslami, S.~Mohamed, P.~Battaglia, M.~Jaderberg, and
  N.~Heess.
\newblock Unsupervised learning of {3D} structure from images.
\newblock In \emph{Advances in Neural Information Processing Systems}, pages
  4996--5004. 2016.

\bibitem[Roeder et~al.(2017)Roeder, Wu, and Duvenaud]{roeder2017}
G.~Roeder, Y.~Wu, and D.~Duvenaud.
\newblock Sticking the landing: Simple, lower-variance gradient estimators for
  variational inference.
\newblock In \emph{Advances in Neural Information Processing Systems}, pages
  6928--6937. 2017.

\bibitem[Salimans et~al.(2017)Salimans, Karpathy, Chen, and
  Kingma]{salimans2017}
T.~Salimans, A.~Karpathy, X.~Chen, and D.~P. Kingma.
\newblock Pixelcnn++: Improving the pixelcnn with discretized logistic mixture
  likelihood and other modifications.
\newblock \emph{Proceedings of the International Conference on Learning
  Representations}, 2017.

\bibitem[Siddharth et~al.(2017)Siddharth, Paige, van~de Meent, Desmaison,
  Goodman, Kohli, Wood, and Torr]{narayanaswamy2017}
N.~Siddharth, B.~Paige, J.-W. van~de Meent, A.~Desmaison, N.~Goodman, P.~Kohli,
  F.~Wood, and P.~Torr.
\newblock Learning disentangled representations with semi-supervised deep
  generative models.
\newblock In \emph{Advances in Neural Information Processing Systems}, pages
  5927--5937, 2017.

\bibitem[S{\o}nderby et~al.(2016)S{\o}nderby, Raiko, Maal{\o}e, S{\o}nderby,
  and Winther]{sonderby2016}
C.~K. S{\o}nderby, T.~Raiko, L.~Maal{\o}e, S.~Kaae S{\o}nderby, and O.~Winther.
\newblock Ladder variational autoencoders.
\newblock In \emph{Advances in Neural Information Processing Systems}, pages
  3738--3746, 2016.

\bibitem[Tipping and Bishop(1999)]{tipping1999}
M.~E. Tipping and C.~M. Bishop.
\newblock Probabilistic principal component analysis.
\newblock \emph{Journal of the Royal Statistical Society: Series B (Statistical
  Methodology)}, 61\penalty0 (3):\penalty0 611--622, 1999.

\bibitem[van~de Geer(2003)]{van2003}
S.~van~de Geer.
\newblock Asymptotic theory for maximum likelihood in nonparametric mixture
  models.
\newblock \emph{Computational Statistics \& Data Analysis}, 41\penalty0
  (3-4):\penalty0 453--464, 2003.

\bibitem[Van~der Vaart and Wellner(1992)]{van1992}
A.~W. Van~der Vaart and J.~A. Wellner.
\newblock Existence and consistency of maximum likelihood in upgraded mixture
  models.
\newblock \emph{Journal of Multivariate Analysis}, 43\penalty0 (1):\penalty0
  133--146, 1992.

\bibitem[Wang(2007)]{wang2007}
Y.~Wang.
\newblock On fast computation of the non-parametric maximum likelihood estimate
  of a mixing distribution.
\newblock \emph{Journal of the Royal Statistical Society: Series B (Statistical
  Methodology)}, 69\penalty0 (2):\penalty0 185--198, 2007.

\bibitem[Zhao et~al.(2017)Zhao, Song, and Ermon]{zhao2017}
S.~Zhao, J.~Song, and S.~Ermon.
\newblock Learning hierarchical features from deep generative models.
\newblock In \emph{Proceedings of the 34th International Conference on Machine
  Learning}, pages 4091--4099, 2017.

\end{thebibliography}
\bibliographystyle{plainnat}

\end{document}